\newtheorem{thmdefinition}{Definition}
\newtheorem{thmlemma}[thmdefinition]{Lemma}
\newtheorem{thmtheorem}[thmdefinition]{Theorem}
\newenvironment{theorem}{\smallskip\begin{thmtheorem}\it}{\end{thmtheorem}\smallskip}
\newenvironment{lemma}{\smallskip\begin{thmlemma}\it}{\end{thmlemma}\smallskip}
\newenvironment{definition}{\smallskip\begin{thmdefinition}\it}{\end{thmdefinition}\smallskip}
\newcommand{\edge}[3]{{#1}\overset{#2}{\longrightarrow}{#3}}
\newcommand*{\Relbarfill@}{\arrowfill@\Relbar\Relbar\Relbar}
\newcommand*{\xeq}[2][]{\ext@arrow 0055\Relbarfill@{#1}{#2}}
\newcommand*{\probleminternal}[4]{
	\par
	\medskip
	\noindent\qquad\quad\fbox{\parbox{0.9\columnwidth}{
		\textbf{#4: #1} \\[0.05in]
		\renewcommand{\tabcolsep}{2pt}
		\begin{tabularx}{\linewidth}{rX}
			\emph{Input:} & #2 \\
			\emph{Output:} & #3
		\end{tabularx}
	}}
	\par%
	\medskip%
	\par\noindent%
}
\newcommand*{\Mainproblem}[3]{\probleminternal{#1}{#2}{#3}{Problem}}
\newcommand{\Integer}{\mathbb{Z}}
\newcommand{\Natural}{\mathbb{N}}
\newcommand{\length}{\operatorname{length}}
\newcommand{\merge}{\operatorname{merge}}
\newcommand{\cins}{c_{\rm ins}}
\newcommand{\cdel}{c_{\rm del}}
\newcommand{\csub}{c_{\rm sub}}
\newcommand{\la}{\leftarrow}
\newcommand{\dey}{d_e^{(y)}}
\newcommand{\dhy}{d_h^{(y)}}
\newcommand{\changed}[1]{\textcolor{black}{#1}\xspace}
\newcommand{\mistake}[1]{\textcolor{black}{#1}\xspace}
\newcommand{\ambiguty}[1]{\textcolor{black}{#1}\xspace}
\begin{document}
\title{Improper Filter Reduction}
\author{%
  Fatemeh Zahra Saberifar \thanks{Department of Mathematics and
    Computer Science, Amirkabir University of Technology, Tehran,
    Iran. \texttt{fz.saberifar@aut.ac.ir}}%
  \qquad%
  Ali Mohades \thanks{Department of Mathematics and
    Computer Science, Amirkabir University of Technology, Tehran,
    Iran. The corresponding author is A. Mohades. \texttt{mohades@aut.ac.ir}}\\%
  Mohammadreza Razzazi \thanks{Department of Computer Engineering and IT, Amirkabir
    University of Technology, Tehran, Iran. \texttt{razzazi@aut.ac.ir}}%
  \qquad%
  Jason M. O'Kane \thanks{Department of Computer Science and Engineering,
    University of South Carolina, Columbia, South Carolina, USA. \texttt{jokane@cse.sc.edu}}%
  }

\date{ }
\maketitle

\begin{abstract}
  Combinatorial filters have been the subject of increasing interest from the
  robotics community in recent years.
  This paper considers automatic reduction of combinatorial filters to a given
  size, even if that reduction necessitates changes to the filter's behavior.
  We introduce an algorithmic problem called \emph{improper
  filter reduction}, in which the input is a combinatorial filter $F$ along
  with an integer $k$ representing the target size.  The output is another
  combinatorial filter $F'$ with at most $k$ states, such that the difference
  in behavior between $F$ and $F'$ is minimal.

  We present two metrics for measuring the distance between pairs of filters,
  describe dynamic programming algorithms for computing these distances, and
  show that improper filter reduction is NP-hard under these metrics.
  We then describe two heuristic algorithms for improper filter reduction, one
  \changed{greedy sequential} approach, and one randomized global approach based on prior work
  on weighted improper graph coloring.  We have implemented these algorithms
  and analyze the results of three sets of experiments. 
\end{abstract}

\bigskip

\textbf{Keywords:} combinatorial filters, estimation, automata

\bigskip

\textbf{Mathematics subject classifications (2010):} 68T40 \and 93C85 \and 68T20 \and 68W01 \and 68R01
\section{Introduction}\label{sec:intro}
This paper builds upon the ongoing line of research on \emph{combinatorial
filtering} for robot tasks~\cite{LaValle12,yu12shadow,Tov09}.
The intuition of that work is to carefully design filters that robots or other
autonomous agents can use to retain only the information that is strictly
necessary for completing an assigned task.
This class of filters is an extremely general tool, applicable for any task
that can be characterized by discrete transitions triggered by finite sets of
observations.

Recent research has considered the problem of automatic reduction of
combinatorial filters: Given a combinatorial filter that correctly solves a
problem of interest, can we algorithmically find the \emph{smallest} equivalent
filter?
Prior work, to which one of the present authors contributed, showed that this
problem is NP-hard and described an efficient heuristic algorithm for finding
small---but not necessarily the smallest---equivalent filters~\cite{OKaShe13}.
However, that prior work left a number of important questions unanswered,
including these:
\begin{quote}
  Is there a useful notion of the behavior of a reduced filter being ``close
  enough'' to the original filter?  If so, then given an input filter, can we
  find the most similar filter of a given fixed size?
\end{quote}
We refer to this problem as the \emph{improper filter reduction} problem, by
analogy to the existing work in improper graph coloring~\cite{Araujo}.
This paper addresses that problem, making several new contributions.
\begin{enumerate}
  \item We introduce a family of metrics for measuring the difference between
  filters and describe dynamic programming algorithms for computing these metrics.

  \item We argue that, for any metric, the improper filter reduction
  problem is NP-hard.

  \item We present two heuristic algorithms for improper filter reduction.
  These are heuristic algorithms, in the sense that there is no guarantee that
  their output will fully minimize the distance from the original filter.
  
  \item We present implementations of these algorithms, along with a series of
  experiments evaluating their performance.
\end{enumerate}
In broad terms, these answers are relevant because they provide some insight
into one of the fundamental tasks in robot design, namely understanding how a
robot should process and retain information collected from its sensors.
\begin{figure}[t]
  \begin{center}
    \raisebox{0.75in}{
      \includegraphics{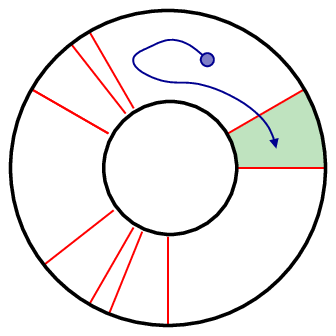}
    }
    \includegraphics[scale=0.45]{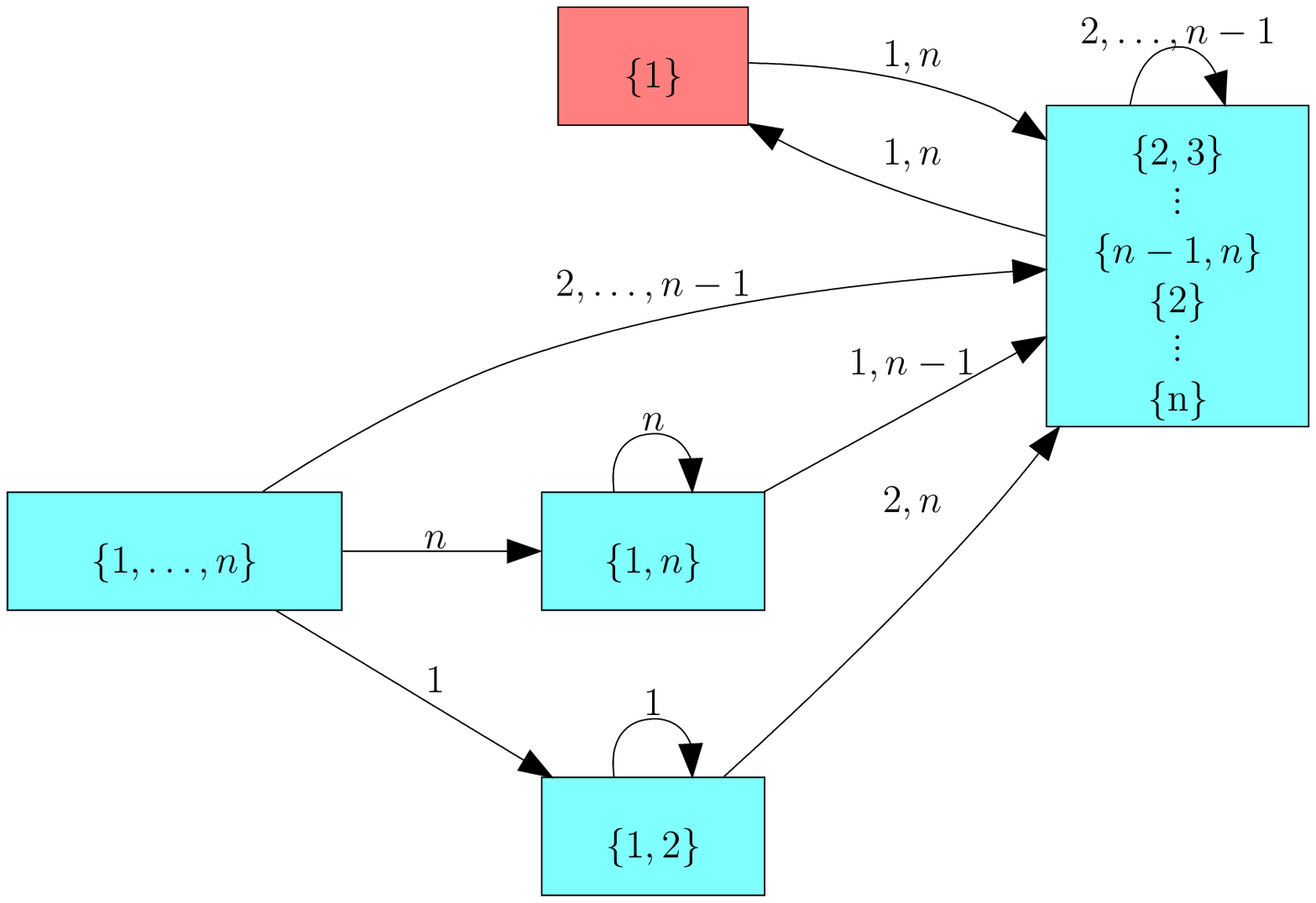}

    \vspace{0.1in}

    \includegraphics[scale=0.45]{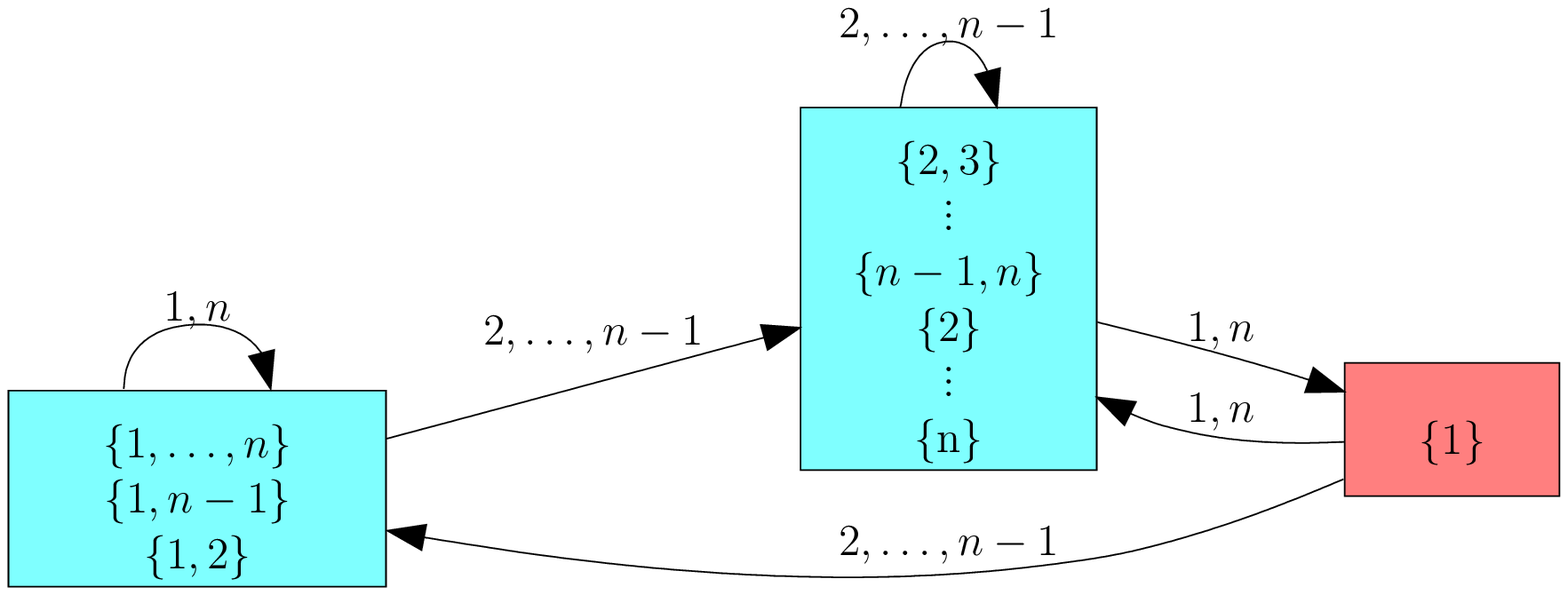}
  \end{center}
  \caption{[top left] An agent moves through an annulus divided into $n$
  regions by $n$ beam sensors.  [top right] The smallest filter, found by a
  prior algorithm~\cite{OKaShe13}, that can accurately track whether the agent
  is provably within region $1$ or not.  [bottom] A smaller filter produced,
  for any specific value of $n$, by the algorithm introduced in this paper.
  This reduced filter eliminates the extra states for region sets $\{1,n\}$ and
  $\{1, 2\}$, which are used only at the start.}
  \label{fig:intro}
\end{figure}
Figure~\ref{fig:intro} shows a simple example, in which an agent moves through
a ring-shaped environment along a continuous but unpredictable path.  A
collection of $n$ beam sensors throughout that environment can detect the
agent passing by, but cannot determine whether that crossing was in the clockwise or
counterclockwise direction.  One pair of beams delimits a special ``goal''
region, called region 1.  A natural question for this system is to ask what
kinds of filters can determine when the agent is within region 1.  That is, we
are interested in designing filters that produce outputs, informally called
colors, that indicate whether the agent is known to be in region 1 or not.
In fact, it is possible to form a series of filters for this problem, each
smaller than the last.
\begin{itemize}
  \item A straightforward approach would be to from a filter,
  represented as a graph of $2^n-1$ nodes, each representing a nonempty set of
  ``possible states,'' with directed edges indicating how the possible states
  change with each beam crossing.  In that filter, we would choose the state
  corresponding to $\{1,\ldots, n\}$ as the initial state.  The filter's output
  would be defined as ``yes'' for the state corresponding to the set
  $\{1\}$---indicating that only state $1$ is consistent with the history of
  observations---and ``no'' for each other state.

  \item That na\"\i{}ve filter can be made more compact by noticing that, if
  $n>3$, only $2n+1$ of those nodes can ever be reached: One initial state in
  which all regions are possible states; one state for each beam, in which only
  the two regions on opposite sides of that beam are possible states;
  and one state for each region, in which the agent is known to be in that
  region.  The remaining $2^n-2n-2$ nodes can be discarded without changing the
  filter's outputs, because no observation sequence can reach them.

  \item That filter can be reduced even further, again without changing its
  outputs, by a carefully selected sequence of vertex contractions.  Prior work
  shows that selecting those vertex contractions optimally is, in general,
  NP-hard~\cite{OKaShe13}, but also presents an efficient algorithm that
  performs this reduction well in practice.  The resulting filter, which has 5
  states regardless of the number of beams, is shown in the top portion of
  Figure~\ref{fig:intro}.

  \item This paper is concerned with additional reductions beyond this smallest
  equivalent filter, which requires us to tolerate some ``mistakes'' in which
  the filter produces incorrect outputs for some observation sequences.  Thus,
  the reduction is ``improper.'' The bottom portion of Figure~\ref{fig:intro}
  shows how the algorithm introduced in this paper can reduce the filter to
  three states, corresponding to ``in region 1''(shown on the right), ``not in
  region 1'' (shown in the middle), and ``maybe'' (shown on the left). The
  filter may produce incorrect outputs for some observation sequences---One
  such observation sequence is $1,n,n,1,1,n,n,\ldots$ --- but behaves the same as
  the original after the first beam not adjacent to region $1$ is crossed.
\end{itemize}
This paper examines the algorithmic problems latent in the final step of this
series of reductions.  The underlying objective is to understand the tradeoffs
between a filter's size and its ability to produce correct outputs.

The remainder of this paper obeys the following structure.
Section~\ref{sec:related} reviews related work, and Section~\ref{sec:prob}
defines the improper filter reduction problem.  Section~\ref{sec:dp} describes
an algorithm for computing the distance between two given filters, which is
used as a subroutine in our two primary algorithms for the improper filter
reduction problem, which are described in Sections~\ref{sec:greedy-reduce} and
\ref{sec:global-reduce}.  Section~\ref{sec:exp} describes our
implementations and experimental evaluation of these algorithms.  Concluding
remarks, including a preview of future work, appear in Section~\ref{sec:conc}.
\section{Related Work}\label{sec:related}
\subsection{Combinatorial filters}
Combinatorial filters, as a general class, build upon the generalized
information space formalism popularized by LaValle~\cite{Lav06,LaValle12}.  The
central idea is to perform filtering tasks in very small derived information
spaces, thereby minimizing the computational burden of executing the filter, and
illuminating the structure underlying the problem itself.
Recent work describes combinatorial filters for
navigation~\cite{lopez12optimal,Tov09}, target tracking~\cite{yu12shadow}, and
story validation~\cite{YuLav11ICRA,YuLav11STAR} problems.
Tovar, Cohen, Bobadilla, Czarnowski, and
LaValle~\cite{tovar09beams,TovarCBCL14} introduced optimal combinatorial
filters for solving some inference tasks in polygonal environments with beam
sensors and obstacles.
Kristek and Shell~\cite{kristek12} showed how to extend existing methods for
sensorless manipulation~\cite{ErdMas88,Gol93} to deformable objects.
Song and O'Kane~\cite{SonOKa12} investigated extensions to limited classes of
infinite information spaces

The deterministic filters we define in this paper that represent as I-state
graph, are an special case of nondeterministic graphs explored by Erdmann~\cite{erdmann10topology,erdmann12topology}.
He has studied topological planning with uncertainty.
\subsection{Filter reduction}
The common thread through all of the prior work mentioned so far is to rely on
human analysis generate efficient filters for specific problems.
The first results on finding optimal filters automatically were presented by
O'Kane and Shell~\cite{OKaShe13}. They proved that the filter minimization problem
is NP-hard and presented a heuristic algorithm to solve it.  The same authors
used similar techniques to solve concise planning~\cite{OKaShe13b} and discreet
communication~\cite{OKaShe15} problems.  Our work extends those filter
reduction results to consider the case in which the reduced filter need not be
strictly equivalent to the original.

Our problem also has some similarity to the problem of measuring the similarity 
of two
deterministic finite automata, as considered by Schwefel, Wegener, and
Weinert~\cite{Schwefel03}, who solved it using evolutionary algorithms.
Chen and Chow also used a scheme for comparing automata, specifically in the
context of web services~\cite{ChenChow2010}.  Our work is differs because of
the unique challenges inherent in the differences between filters and
automata---because the behavior of a filter may be undefined for certain
state-action pairs, the problem of measuring similarities between filters is
more challenging.
\subsection{Probabilistic methods}
There are also some connections between the combinatorial filters considered
here and the Bayesian probabilistic filters commonly used in mobile
robotics~\cite{Thrun05}, including the Kalman filter~\cite{Kalman60} as a
special case.  In both cases, the filter's operation can be described as a
discrete-time transition system, in which the state of the system corresponds
to the robot's ``belief'' about the current state of the world, and transitions
between such states are triggered by observations.  The primary difference is
that for combinatorial filters, we generally assume that both the state space
and the observation space are finite, which enables a number of interesting
algorithmic questions---including, for example, the filter reduction problem
addressed in this paper---to be reasonably posed.

There exists some research in automatic probabilistic motion planning using
partially-observable Markov decision process (POMDP) models.
Roy, Gordon, and Thrun applied dimensionality-reduction techniques to reduce
the computation needed for effective planning under such
models~\cite{roy05belief}.
Ballesteros, Wegener, and Weinert~\cite{Ballesteros13} improved the efficiency
of online POMDPs in planning task by reduction of similar belief points based
on a similarity measurement.  Crook, Keizer, Wang, Tang, and
Lemon~\cite{CrookKWTL14} also presented an automatic POMDP-based approach for
belief space compression. 
\section{Definitions and Formulation}\label{sec:prob}
This section provides basic definitions
for combinatorial filters and the improper filter reduction problem.
\subsection{Filters and their languages}
A robot receives a discrete sequence of observations, drawn from a finite
observation space.  Each observation represents a single sensor reading or a
passively-observed action taken by the robot.  In response to each of these
observations, the robot produces an output, informally called a color, and
modeled without loss of generality as a natural number. 
We model this type of system as a transition graph.  Each vertex of the
transition graph represents the knowledge, called an \emph{information
state}, retained by the robot at some particular time.  Each directed
edge is labeled with an observation, showing how the information state changes
in response to incoming observations.  Definition~\ref{def:isg} formalizes this idea.
\begin{definition}\label{def:isg}
  {A filter $F=(V, E, l\changed{, Y}, c, q_0)$ is 6-tuple, representing a directed
  graph with labels on both its vertices and edges, in which
    \begin{itemize}
      \item the set $V$ is a finite set of vertices called \emph{information
      states} or simply \emph{states},
      \item the multiset $E$ is a finite collection of ordered pairs of states,
      called \emph{transitions}, \mistake{in which every state has at least one out-edge,}
      \item the function $l: E \rightarrow Y$ assigns a label $l(e)$ to each edge
      $e \in E$, such that no two edges share both a source vertex and a label,
      \item the domain $Y$ of $l$ is called the \emph{observation space},
      representing the sensor observations that act as the inputs to the
      filter,
      \item the function $c: V \rightarrow \Natural$ assigns an integer $c(q)$,
      informally called the \emph{color} of $q$, to each
      state $q \in V$, representing the output of the filter at that state,
      and
      \item the vertex $q_0 \in V$ is called the \emph{initial state}.
    \end{itemize}
  }
\end{definition}
\noindent Definition~\ref{def:isg} makes two important assumptions about the
edges in a filter.  First, it requires that no two edges originating
from the same vertex can share the same label.  Thus, given a ``current''
state and a new observation, there is at most one resulting state reached
by following the edge labeled with that observation.  Second, the definition
does \textsl{not} require that there must be an outgoing edge for each
observation from each vertex.  This corresponds to situations in which, based
on the underlying structure of the problem, the robot is certain that a given
observation cannot occur from a given state.

During its execution, the filter starts at the initial state $q_0$ and immediately outputs
$c(q_0)$.  After each observation $y$, the filter transitions from its current
state $q$ to a new state $q'$, following the edge $\edge{q}{y}{q'}$, if that
edge exists.  The filter then generates a new output, namely $c(q')$, and
awaits the next observation.  For a given observation string $s=y_1y_2\cdots y_m$, 
there are two cases:
\begin{enumerate}
  \item If all of the corresponding edges exist, then filter's output is a
  sequence of $m+1$ colors.  We let $\changed{F(s, q)}$ denote the output sequence
  generated when the filter $F$ processes $s$ starting from state $q$.  We also
  use the shorthand $F(s)$ to denote $F(s, q_0)$.

  \item If the filter ever reaches a current state for which no out-edge is labeled
  with the next observation, we say that the filter has failed for this input,
  and the filter output is undefined.
\end{enumerate}
The basic goal of this paper is to understand, given a filter that acts as a
``specification'' of the desired output, how to find small filters that produce
the substantially similar outputs.  However, this comparison only makes sense
for observation sequences for which the output of the original filter is
well-defined.  The next definition formalizes this idea.
\begin{definition}
  {The \emph{language} $L(F)$ of a filter $F$ is the set of all
  observation sequences $s$ for which $F(s)$ is well-defined.}
\end{definition}
\subsection{Defining distance between filters}
Given two filters $F_1$ and $F_2$ with the same observation space $Y$, we
define the distance between $F_1$ and $F_2$ by considering their operation on
identical observation strings, and quantifying the difference between the
corresponding output strings.

Specifically, we choose a function
  $ m: \changed{\bigcup}_{i \in \Natural} (Y^i \times Y^i) \to \Integer$,
representing a metric that measures the distance between two equal-length
observation strings.  We consider, both in the algorithms of
Section~\ref{sec:dp} and the experiments in Section~\ref{sec:exp}, two specific
options for $m$:
\begin{enumerate}
  \item The Hamming distance~\cite{Hamming}, denoted $\operatorname{h}$, which
  counts the number of positions at which the two strings differ.
  \item The edit distance~\cite{Wagner–Fischer}, denoted $\operatorname{e}$,
  which is the smallest number of single-character insert, delete, and
  substitute operations, weighted by given operation costs $\cins$, $\cdel$,
  and $\csub$ respectively, needed to transform \changed{one} string into another.
\end{enumerate}
We then use this distance function over observation strings, either $m=h$
or $m=e$, to define the distance $D_m(F_1, F_2)$ between a pair of filters
$F_1$ and $F_2$:
  \begin{equation}\label{eq:D}
    D_m(F_1, F_2)  = \sup_{s \in L(F_1) \cap L(F_2)} \left(
      \frac
        {m(F_1(s), F_2(s))}
        {\length(s)+1}
    \right).
  \end{equation}
The intuition is to consider the worst case distance between output strings, over all
observation strings in $L(F_1)\cap L(F_2)$---that is, over all observation
strings that can be processed by both filters---normalized by the length of
the output strings.
For the special case in which $L(F_1)\cap L(F_2) = \emptyset$, we define
$D_m(F_1,F_2)=0$.
The
use of worst-case reasoning allows us to compare filters without the modelling
burden of assigning probabilities to observation sequences, and the
normalization is necessary to ensure that the distance between filters is
finite.  Because the denominator represents the length of both $F_1(s)$ and
$F_2(s)$---that is, one more output than there are observations---this can be
viewed as an ``average cost-per-stage'' model as described by
LaValle~\cite{Lav06}.

\subsection{Improper filter reduction}
We can now state the central algorithmic problem addressed in this paper.

\Mainproblem{Improper-FM}
  {A filter $F$ and an integer $k$.}
  {A filter $F'$ with at most $k$ states, such that $L(F) \subseteq L(F')$ and
  $D_m(F, F')$ is minimal.}%

Note, however, that this problem can be proven NP-hard, regardless of the
choice of $m$.

\begin{theorem}
  \textsc{Improper-FM} is NP-hard.
\end{theorem}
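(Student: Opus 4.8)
The plan is to reduce from the (proper) filter minimization problem of O'Kane and Shell~\cite{OKaShe13}, which is already known to be NP-hard. Recall that in their setting the decision question asks, given a filter $F$ and an integer $k$, whether there exists a filter $F'$ with at most $k$ states that is equivalent to $F$ in the sense that $L(F) \subseteq L(F')$ and $F(s) = F'(s)$ for every $s \in L(F)$. My intention is to show that any algorithm solving \textsc{Improper-FM} would immediately decide this question, so that \textsc{Improper-FM} inherits the hardness.

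First I would observe that the reduction is essentially the identity map: an instance $(F,k)$ of proper filter minimization is sent to the same pair $(F,k)$ regarded as an \textsc{Improper-FM} instance, which is trivially computable in polynomial time. The only real content therefore lies in relating the two notions of a ``good'' output filter, and this is the key step. I would argue that, for any filter $F'$ with $L(F)\subseteq L(F')$, we have $D_m(F,F')=0$ if and only if $F'$ is equivalent to $F$ in the sense above. This is where the phrase ``regardless of the choice of $m$'' is earned: because $m$ is a genuine metric—true for both the Hamming and the edit distance—it satisfies $m(x,y)=0$ exactly when $x=y$. Since $L(F)\subseteq L(F')$ forces $L(F)\cap L(F')=L(F)$, the supremum in \eqref{eq:D} vanishes precisely when $m(F(s),F'(s))=0$ for every $s\in L(F)$, i.e.\ when the two filters produce identical output strings on all of $L(F)$. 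Hence the minimum distance achievable by a $k$-state filter is $0$ exactly when an equivalent $k$-state filter exists.

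Putting these together, I would conclude that the proper-minimization instance $(F,k)$ is a YES-instance if and only if the optimal value of \textsc{Improper-FM} on $(F,k)$ equals $0$; equivalently, the natural decision form of \textsc{Improper-FM} with distance threshold $\delta=0$ answers YES exactly on the YES-instances of proper minimization. Since this map is polynomial and preserves answers, NP-hardness of \textsc{Improper-FM} follows.

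The main thing to get right is the bookkeeping around languages. One must confirm that the inclusion constraint $L(F)\subseteq L(F')$ used by \textsc{Improper-FM} matches the equivalence notion of the proper problem, rather than a stricter requirement $L(F)=L(F')$: if the proper problem demanded exact language equality, a distance-zero solution with a strictly larger language would not be an admissible equivalent filter and the two optimal sizes could in principle diverge. Verifying this alignment of definitions—and checking that distance zero does not inadvertently permit $F'$ to differ from $F$ on some string outside $L(F)$—is the only delicate point, and it is exactly the consequence of $m$ being a metric that the second paragraph secures.
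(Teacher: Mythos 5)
Your proposal is correct, and it rests on the same core idea as the paper's own proof: a reduction from error-free filter minimization (\textsc{FM}), hinging on the observation that when $L(F) \subseteq L(F')$, the distance $D_m(F,F')$ vanishes exactly when $F'$ is a properly equivalent filter---which, as you note, is what earns the ``regardless of the choice of $m$'' clause, since any metric satisfies $m(x,y)=0$ iff $x=y$. The difference is in the packaging. The paper reduces from the \emph{optimization} form of \textsc{FM} (find the smallest equivalent filter), and therefore must wrap the \textsc{Improper-FM} oracle in a binary search over $k \in \{0,\ldots,n\}$ to locate the smallest $k$ admitting distance $0$; you instead reduce from the \emph{decision} form of \textsc{FM} and need only the single instance $(F,k)$ itself, asking whether the optimal distance is $0$. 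Your version is tighter in two respects: it dispenses with the binary search (an artifact of the paper's choice of source problem), and it makes explicit the zero-distance-iff-equivalence lemma that the paper compresses into the phrase ``by definition.'' Your closing concern about $L(F)\subseteq L(F')$ versus $L(F)=L(F')$ is resolved the same way the paper resolves it: the paper's proof itself characterizes the correct \textsc{FM} output as an $F'$ satisfying only the inclusion, matching the admissibility constraint of \textsc{Improper-FM}. One caveat applies equally to both arguments: to convert an \textsc{Improper-FM} oracle into a decision procedure for \textsc{FM}, one must test in polynomial time whether the returned filter $F^\star$ satisfies $D_m(F,F^\star)=0$. This is true---zero distance holds iff every reachable state pair in the product of $F$ and $F^\star$ carries equal colors, which is a polynomial reachability check---but neither the paper nor your write-up states it; your reformulation as a threshold-$\delta=0$ decision version of \textsc{Improper-FM} sidesteps the issue more cleanly than the paper does.
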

\begin{proof}
  Reduction from the basic (error-free) filter minimization problem,
  \textsc{FM}~\cite{OKaShe13}.  Given an instance $\changed{F}$ of \textsc{FM}, one can
  use binary search on the range $\{0, \ldots, n\}$ to find the smallest value
  of $k$ for which there exists an $F'$ with $L(F) \subseteq L(F')$ and $D_m(F, F')=0$.  This $F'$ is, by
  definition, the correct output for \textsc{FM}.  Therefore, a polynomial time
  algorithm for \textsc{Improper-FM} would imply a polynomial time algorithm for
  \textsc{FM}.  Since \textsc{FM} is NP-hard, and we have a polynomial-time
  reduction from \textsc{FM} to \textsc{Improper-FM}, conclude that
  \textsc{Improper-FM} NP-hard as well.
\end{proof}

Consequently, we restrict our attention \changed{in} this paper to heuristic algorithms
that attempt, but cannot guarantee, to minimize the distance between the reduced
filter and the original.

\section{Computing distance between filters}\label{sec:dp}
Before turning our attention to \textsc{Improper-FM}, we first consider the
related problem of computing the distance $D_m(F_1, F_2)$ between a given pair of
filters $F_1$ and $F_2$.  Attempting to evaluate Equation~\ref{eq:D} directly
would be futile, because it includes a supremum over $L(F_1)\cap L(F_2)$, which
in general may be an infinite set.  Instead, we introduce a dynamic programming
algorithm that whose outputs converge to $D_m(F_1, F_2)$.  The details of the
algorithm differ depending on whether the underlying string distance function
$m$ is Hamming distance function $h$ (addressed in Section~\ref{sec:dp-ham}) or
edit distance function $e$ (addressed in Section~\ref{sec:dp-edit}).

\subsection{Hamming distance based metric}\label{sec:dp-ham}
Our algorithm for computing $D_h(F_1, F_2)$ is based on computing
successive values of a simpler function $d_h$, which only considers observation
strings up to a certain given length.  Our algorithm considers longer
observation strings at each iteration.  To facilitate a dynamic programming
solution, we also must consider different starting states for each filter.

\begin{definition}
  Let $d_h(q_1, q_2, k)$ denote the maximum, over all strings $s$ of length $k$
  in $L(F_1) \cap L(F_2)$, of $h(F_1(\changed{s, q_1}), F_2(\changed{s, q_2}))$, or $0$ if there
  are no such strings.
\end{definition}

\noindent The function $d_h$ is useful because values for $D_h$ can be derived
from values of $d_h$, as shown in the next lemma.

\begin{lemma}\label{lem:Dlim}
  For any two filters $F_1$ and $F_2$, with initial states $q_{0_1}$ and
  $q_{0_2}$ respectively, and with $L(F_1) \subseteq L(F_2)$, we have
    \begin{equation*}\label{eq:Dlim}
      D_h(F_1, F_2) = \lim_{k \to \infty} \left(
        \max_{i \in 1,\ldots,k} \frac{d_h(q_{0_1}, q_{0_2}, i)}{i+1}
      \right).
    \end{equation*}
\end{lemma}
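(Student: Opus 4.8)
The plan is to read both sides of the identity as suprema of the same family of normalized per-string distances, and then to convert a supremum over the (generally infinite) language into a limit of finite maxima by a monotone-convergence argument. First I would invoke the hypothesis $L(F_1)\subseteq L(F_2)$ to replace $L(F_1)\cap L(F_2)$ by $L(F_1)$ in Equation~\ref{eq:D}, so that the supremum defining $D_h$ runs over $L(F_1)$ alone. I would then regroup this supremum according to string length. Because the observation space $Y$ is finite, only finitely many strings have any given length $i$, so the contribution of the length-$i$ strings in $L(F_1)$ is a genuine maximum of the normalized distance $\frac{h(F_1(s,q_{0_1}),F_2(s,q_{0_2}))}{i+1}$; by the definition of $d_h$ this maximum equals $\frac{d_h(q_{0_1},q_{0_2},i)}{i+1}$, taking the value $0$ when no length-$i$ string lies in $L(F_1)$. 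Writing $a_i=\frac{d_h(q_{0_1},q_{0_2},i)}{i+1}$, this gives the reformulation $D_h(F_1,F_2)=\sup_{i\ge 0} a_i$.

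The second step analyzes the running maxima $b_k=\max_{1\le i\le k} a_i$ appearing on the right-hand side. The sequence $(b_k)$ is nondecreasing, since $b_{k+1}=\max(b_k,a_{k+1})$, and it is bounded above by $1$, because the Hamming distance between two strings of length $i+1$ is at most $i+1$ and hence $a_i\le 1$ for every $i$. A bounded monotone sequence converges to its supremum, so $\lim_{k\to\infty} b_k=\sup_{i\ge 1} a_i$, which is exactly the quantity on the right of the claimed equation. This step already captures the key phenomenon that the normalization prevents any ``mass'' from escaping as $i\to\infty$: no tail term can exceed the supremum that the limit records, so convergence to $\sup_{i\ge 1} a_i$ needs nothing beyond monotonicity and boundedness.

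What remains is to reconcile $\sup_{i\ge 1} a_i$ with the reformulated left-hand side $\sup_{i\ge 0} a_i$, and I expect this bookkeeping to be the main obstacle. The two suprema agree on every positive length, so the sole discrepancy is the empty-string term $a_0$, which is nonzero precisely when the two filters assign different colors to their initial states; crucially, the empty string always belongs to $L(F_1)$. I would therefore handle $i=0$ explicitly rather than assume it is subsumed, verifying whether an initial-color disagreement is necessarily dominated by some positive-length witness and, if not, folding the $i=0$ term into the index range of the maximum. Once the index ranges are squared away, chaining the length-grouped reformulation of $D_h$ with the monotone-convergence limit of $(b_k)$ closes the argument; by comparison, the regrouping of the supremum and the monotonicity and boundedness of $(b_k)$ are routine.
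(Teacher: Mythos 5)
Your proposal is correct, and in substance it establishes the same facts as the paper's proof: both arguments come down to showing that the running maxima $b_k=\max_{1\le i\le k}a_i$, with $a_i=d_h(q_{0_1},q_{0_2},i)/(i+1)$, form a nondecreasing sequence bounded above by $D_h(F_1,F_2)$ that eventually comes within any $\epsilon$ of it. The organization differs: the paper runs a direct $\epsilon$-argument (one inequality because the supremum in Equation~\ref{eq:D} ranges over a superset of the strings behind any $b_k$; the other by choosing a near-supremum witness string $s$ and setting $k=\length(s)$), whereas you first collapse the supremum over strings into the length-indexed sequence $(a_i)$ --- using finiteness of $Y$ so that each length class attains its maximum, which is by definition $d_h$ --- and then invoke monotone convergence. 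Your version buys a clean separation of the combinatorial regrouping from the analytic limit; the paper's version is the same estimate carried out without naming the sequence.

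The question you leave open --- whether an initial-color disagreement is necessarily dominated by a positive-length witness --- has a definite answer: no, so the repair you contemplate is genuinely needed. Take $F_1$ to be a single state of color $0$ with a self-loop labeled $y$, and $F_2$ to have an initial state of color $1$ with a $y$-edge to a state of color $0$ that carries a $y$ self-loop. Both languages consist of all strings $y^n$ with $n\ge 0$ (the empty string lies in every filter's language, since the filter outputs $c(q_0)$ on it), and $h\bigl(F_1(y^n),F_2(y^n)\bigr)=1$ for every $n$, so $a_0=1$ while $a_i=1/(i+1)$ for $i\ge 1$. Hence $D_h(F_1,F_2)=1$, witnessed only by the empty string, but $\lim_{k\to\infty}\max_{1\le i\le k}a_i=1/2$: the lemma as literally stated fails, and folding $i=0$ into the index range is the correct fix (note that Algorithm~\ref{alg:Dh} already returns $\max_{i=0,\ldots,k-1}$, so the lemma's range starting at $1$ is best read as a typo). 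The paper's own proof breaks silently at exactly this point: when the only near-supremum witnesses are the empty string, its choice $k=\length(s)=0$ leaves $\max_{i\in 1,\ldots,k}$ ranging over an empty set. So your insistence on treating $i=0$ explicitly is not bookkeeping pedantry; it is what separates a true statement from a false one.
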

\begin{proof}
  For any $\epsilon>0$, we must show that, for sufficiently large $k$, we have
    $$ \left|
        D_h(F_1, F_2) -
        \max_{i \in 1,\ldots,k} \frac{d_h(q_{0_1}, q_{0_2}, i)}{i+1}
      \right| < \epsilon.
    $$
  First, we note that, for any $k$, we have
  \begin{eqnarray*}
    D_h(F_1, F_2)
    &\ge& \max_{i \in 1,\ldots,k} \frac{d_h(q_{0_1}, q_{0_2}, i)}{i+1},
  \end{eqnarray*}
  because both the left and right expressions are defined to maximize the same
  quantity defined over some set of observation strings, but the left side
  considers a superset of the observation strings considered in the maximum
  operation in the right side.

  For the other direction, we let $s$ denote an observation string for which
    \begin{equation}\label{eq:s}
      D_h(F_1, F_2) - \frac{h(F_1(s), F_2(s))}{\length(s)+1} \le \epsilon.
    \end{equation}
  Such a string must exist by the definition of supremum.
  Then, choosing $k=\length(s)$, we have
  \begin{equation}\label{eq:xxx}
    \max_{i \in 1,\ldots,k} \frac{d_h(q_{0_1}, q_{0_2}, i)}{i+1}
    \ge \frac{h(F_1(s), F_2(s))}{k+1}
    \ge D_h(F_1, F_2) - \epsilon.
  \end{equation}
  In Equation~\ref{eq:xxx}, the first step holds because the specific string
  $s$ is among the strings considered in the maximum over all strings of length
  at most $k$, and the second steps proceeds directly from Equation~\ref{eq:s}.
  Therefore, the difference between $D_h(F_1,F_2)$ and 
  $\max_{i \in 1,\ldots,k} d_h(q_{0_1}, q_{0_2}, i)/({i+1})$
  is at most $\epsilon$, completing the proof. 
\end{proof}

When $k=0$, the filter receives no observations and produces a single output,
so $d_h$ is trivial to compute in this case, depending only on whether the
single outputs produced by each filter match each other:
\begin{equation}\label{eq:HD-base}
  d_h(q_1, q_2, 0) = \begin{cases}
    0 & \text{if $c_1(q_1) = c_2(q_2)$} \\
    1 & \text{otherwise}
  \end{cases}.
\end{equation}

Next we address the general case in which $k > 0$.  From any state pair $(q_1$, $q_2)$, we
must consider the set of observations for which both $q_1$ and $q_2$ have
\changed{same} outgoing edges, denoted $Y(q_1, q_2)$.  Thus, for any observation in $Y(q_1,
q_2)$, we know that there exists an edge $\edge{q_1}{y}{q_1'}$ in $F_1$ and an edge
$\edge{q_2}{y}{q_2'}$ in $F_2$, labeled with the same observation $y$.  
 \begin{equation}
    \changed{ \dhy(q_1, q_2, k)
    = 
     d_h(q_1, q_2, 0) + d_h(q'_1, q'_2, k-1).}
  \end{equation}
\changed{Then we can express $d_h(q_1, q_2,k)$ recursively in terms of $\dhy$ values with
shorter observation string lengths:}
   \begin{equation}\label{eq:general-h}
    \changed{d_h(q_1, q_2, k)
    = \max_{y \in Y(q_1, q_2)} \dhy(q_1, q_2, k),}
   \end{equation}
When $Y(q_1, q_2)$ is empty, we use $d_h(q_1, q_2, k) = 0$.  (The intuition of
this special case is that, in this case, $q_1$ and $q_2$ are a good choice to
merge, because when forming a reduced filter in which $q_1$ is merged with
$q_2$, there will be no ambiguity in selecting the correct destination for any
transition from the combined state.)

Algorithm~\ref{alg:Dh} shows the dynamic programming algorithm that uses
this recurrence to compute $D_h(F_1, F_2)$.
The intuition is to compute $d_h$ for increasing values of $k$, until those
$d_h$ values converge, and then to find appropriate normalized maximum.

\begin{algorithm}[t]
  \label{alg:Dh}
  \caption{Dynamic programming to compute $D_h(F_1, F_2)$.}
  \DontPrintSemicolon
  \SetAlgoLined
  \newcommand{\done}{\ensuremath{\mbox{done}}}
  \newcommand{\true}{\ensuremath{\mbox{True}}}
  \newcommand{\false}{\ensuremath{\mbox{False}}}
  $k \la 0$ \;
  \Repeat{\done}{
    $\done \la \true$\;
    \For{$(q_1, q_2) \in V_1 \times V_2$}{
      Compute $d_h(q_1, q_2, k)$ using Eq.~\ref{eq:HD-base} or \ref{eq:general-h}.\;
      \If{$\left|
        \frac{d_h(q_1, q_2, k)}{k+1}
        -
        \frac{d_h(q_1, q_2, k-1)}{k}
      \right| > \epsilon$}{
        $\done \la \false$ \;
      }
    }
    $k \la k + 1$ \;
  }
  \Return{$\displaystyle \max_{i=0,\ldots,k-1} \frac{d_h(q_{0_1}, q_{0_2}, i)}{i+1}$}
\end{algorithm}

\subsection{Edit distance based metric}\label{sec:dp-edit}
We now turn our attention to algorithms that, given two filters $F_1$ and
$F_2$, compute $D_e$.  The approach is similar to the approach for $D_h$ in
Section~\ref{sec:dp-ham}.  However, the algorithm for $D_e$ is more complex
because it must account for the different-length strings that can arise from
insert and delete operations on the output sequences.

The algorithm works by computing values for a function called $d_e$, which we
define below.
\begin{definition}\label{def:de}
  Let $d_e(q_1, k_1, q_2, k_2)$ denote the largest edit distance\changed{, denoted $e(F_1(s_1, q_1), F_2(s_2, q_2))$,}
 between
    $F_1(\changed{s_1, q_1})$ and $F_2(\changed{s_2, q_2})$,
  over all
  strings $s_1$ of length $k_1$ and all
  strings $s_2$ of length $k_2$,
  for which 
    $s_1,s_2 \in L(F_1)\cap L(F_2)$, 
  and $s_1$ and $s_2$ are identical to each other for the first $\min(k_1,k_2)$
  observations.
\end{definition}
We can show that $d_e$ is useful for computing $D_e$ with a lemma analogous to
Lemma~\ref{lem:Dlim}.
\begin{lemma}\label{lem:Dlim2}
  For any two filters $F_1$ and $F_2$, with initial states $q_{0_1}$ and
  $q_{0_2}$ respectively, and with $L(F_1) \subseteq L(F_2)$, we have
    \begin{equation*}\label{eq:Dlim2}
      D_e(F_1, F_2) = \lim_{k \to \infty} \left(
        \max_{i \in 1,\ldots,k} \frac{d_e(q_{0_1}, i, q_{0_2}, i)}{i+1}
      \right).
    \end{equation*}
\end{lemma}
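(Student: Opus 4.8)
The plan is to mirror the argument used for Lemma~\ref{lem:Dlim} in the Hamming case, since the only structural change is replacing $h$ by $e$ and the two-argument function $d_h$ by the four-argument function $d_e$ evaluated on its ``diagonal.'' The first step is to record what $d_e$ computes on that diagonal. Setting $k_1 = k_2 = i$ forces $\min(k_1, k_2) = i$, so Definition~\ref{def:de} requires $s_1$ and $s_2$ to agree on all $i$ of their symbols; since both have length $i$, this means $s_1 = s_2$. Consequently $d_e(q_{0_1}, i, q_{0_2}, i)$ is exactly the maximum of $e(F_1(s), F_2(s))$ over all strings $s \in L(F_1) \cap L(F_2)$ of length $i$, which is precisely the quantity whose normalized version appears inside the supremum defining $D_e$.

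With this identification in hand, I would prove the two inequalities exactly as before. For one direction, for every $k$ the term $\max_{i \in 1,\ldots,k} d_e(q_{0_1}, i, q_{0_2}, i)/(i+1)$ is a supremum of $e(F_1(s), F_2(s))/(\length(s)+1)$ taken over the subset of $L(F_1) \cap L(F_2)$ consisting of strings of length at most $k$, whereas $D_e$ is the supremum over all of $L(F_1) \cap L(F_2)$; hence the left side never exceeds $D_e$. For the other direction, I invoke the definition of supremum to select, given $\epsilon > 0$, a string $s$ with $D_e(F_1, F_2) - e(F_1(s), F_2(s))/(\length(s)+1) \le \epsilon$, and then take $k = \length(s)$. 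Because $s$ itself is among the length-$k$ strings over which $d_e(q_{0_1}, k, q_{0_2}, k)$ maximizes, the diagonal term at index $k$ is at least $e(F_1(s), F_2(s))/(k+1) \ge D_e(F_1, F_2) - \epsilon$, so the running maximum is within $\epsilon$ of $D_e$.

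To close out the limit, I would note that $a_k := \max_{i \in 1,\ldots,k} d_e(q_{0_1}, i, q_{0_2}, i)/(i+1)$ is nondecreasing in $k$ and bounded above (by $D_e$, or, concretely, by $\csub$, since two equal-length color strings are at worst a full sequence of substitutions apart). A bounded monotone sequence converges, and the two inequalities pin its value: for all $k \ge \length(s)$ we have $D_e - \epsilon \le a_k \le D_e$, so $a_k \to D_e$ as claimed.

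The bulk of the work is purely a transcription of the Hamming argument, so I expect no serious obstacle. The one step that genuinely needs care, rather than copying, is the diagonal collapse in the first paragraph: I must confirm that the general, asymmetric definition of $d_e$---which permits $k_1 \neq k_2$ precisely so that the later dynamic program can track insertions and deletions---really does reduce, when $k_1 = k_2$, to comparing the two filters on one common string, so that $d_e(q_{0_1}, i, q_{0_2}, i)/(i+1)$ lines up with the summand in the supremum defining $D_e$. Once that identification is justified, the edit-distance-specific machinery plays no further role in this lemma.
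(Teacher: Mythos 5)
Your proposal is correct and follows essentially the same route as the paper: the paper's own proof simply states that ``the same argument from the proof of Lemma~\ref{lem:Dlim} applies'' because setting $k_1=k_2=i$ makes the differing-length flexibility of $d_e$ irrelevant, which is exactly the diagonal-collapse observation you establish and then use to transcribe the Hamming argument. Your write-up is merely more explicit than the paper's (spelling out the two inequalities and the monotone-convergence closing step), but there is no difference in substance.
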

\begin{proof}
  The same argument from the proof of Lemma~\ref{lem:Dlim} applies.  The fact
  that $d_e$ considers observation sequences of different lengths is irrelevant
  here, because the limit in Equation~\ref{eq:Dlim2} uses the same value,
  namely $k_1=k_2=i$, for the two string lengths. 
\end{proof}

We can now construct a recurrence for $d_e$.  There are three base cases.
\begin{enumerate}
  \item When $k_1=0$ and $k_2=0$, both filters produce a single
  output.  Editing one such string into the other can be one either by a
  substitution, or by one deletion and one insertion:
    \begin{equation}\label{eq:ED-baseA}
      d_e(q_1, 0, q_2, 0) = \begin{cases}
        0 & \text{if $c_1(q_1) = c_2(q_2)$} \\
        \min(\csub, \cdel + \cins) & \text{otherwise}
      \end{cases}.
    \end{equation}

  \item When $k_1=0$ and $k_2>0$, we have a single character $c(q_1)$ of output
  from $F_1$ and a longer output string $F_2(s_2,q_2)$, with length $k_2+1$,
  from $F_2$.
  The edit distance between these strings depends on whether $c(q_1)$ appears
  in $F(s_2, q_2)$.
  If $c(q_1)$ does not appear in $F(s_2,q_2)$, then we need either
    (a) 1 deletion and $k_2+1$ insertions or
    (b) 1 substitution and $k_2$ insertions,
  whichever has smaller total cost.
  If $c(q_1)$ does appear in $F(s_2, q_2)$, then $c(q_1)$ can be `reused' in
  $F(s_2, q_2)$, reducing the edits to simply $k_2$ insertions.

  Because Definition~\ref{def:de} calls for the largest edit distance, the
  relevant question is whether there exists \textsl{any} sequence of observations
  which, when given as input to $F_2$ starting at $q_2$, avoids all states with
  color $c(q_1)$ for at least $k_2$ transitions.  Let $L(q_2,c(q_1))$ denote
  the length of the longest path in $F_2$ starting from $q_2$ that does not
  visit any states of color $c(q_1)$.\footnote{Note that this need not be a
  simple path, so the standard hardness result for longest paths in direct
  graphs~\cite{GareyJohnson1979} does not apply; in fact we can compute $L(q_2,
  c(q_1))$ efficiently using a variant of Dijkstra's algorithm.}
  We can express $d_e$ for this case based on this value:
    \begin{equation}\label{eq:ED-baseB}
      d_e(q_1, 0, q_2, k_2 ) = k_2\cins + \begin{cases}
        \min(
          \cdel + \cins,
          \csub
        )
          & \text{if $L(q_2,c(q_1)) > k_2$} \\
        0
          & \text{otherwise}
      \end{cases}
    \end{equation}

  \item When $k_1>0$ and $k_2=0$, the situation is analogous, but with the
  roles of $F_1$ and $F_2$ swapped:
    \begin{equation}\label{eq:ED-baseC}
      d_e(q_1, k_1, q_2, 0) = k_1\cdel + \begin{cases}
        \min(
          \cins + \cdel,
          \csub
        )
          & \text{if $L(q_1,c(q_2)) > k_1$} \\
        0
          & \text{otherwise}
      \end{cases}
    \end{equation}

\end{enumerate}
In the general case, we can express values for $d_e$ using a recurrence similar
to the standard recurrence for edit distance~\cite{Wagner–Fischer}, but
accounting for the changes in state that accompany each observation.  For given
values of $q_1$, $k_1$, $q_2$, and $k_2$, we must consider the worst case over
all observations for which both $q_1$ and $q_2$ have \changed{same} outgoing edges.  As in
Section~\ref{sec:dp-ham}, we write $Y(q_1, q_2)$ to denote this set of
observations, and for each $y \in Y(q_1, q_2)$ we write $\edge{q_1}{y}{q_1'}$
and $\edge{q_2}{y}{q_2'}$ for the two corresponding edges.  To compute
$d_e(q_1, k_1, q_2, k_2)$, we must consider the worst case over all
observations $y \in Y(q_1, q_2)$:
  \begin{equation}\label{eq:general-e}
    d_e(q_1, k_1, q_2, k_2)
      = \max_{y \in Y(q_1, q_2)} \dey(q_1, k_1, q_2, k_2),
  \end{equation}
in which we have introduced a shorthand notation $\dey$ for the value of $d_e$
predicated receiving a specific observation $y$ next.
There are two cases for $\dey$.
\begin{enumerate}
  \item If $c_1(q_1) = c_2(q_2)$, no edits are needed, so we have
    $$ \dey(q_1, k_1, q_2, k_2) = \dey(q_1', k_1-1, q_2', k_2-1).$$

  \item If $c_1(q_1) \neq c_2(q_2)$, we use the smallest cost from among
  insert, delete, and substitute operations:
    \begin{equation*}
      \changed{\dey(q_1, k_1, q_2, k_2)} =
      \min \left\{
        \begin{gathered}\label{eq:d2}
          d_e(q_1,   k_1,    q'_2,  k_2-1 ) + \cins, \\
          d_e(q_1',  k_1-1,  q_2,   k_2   ) + \cdel, \\
          d_e(q_1',  k_1-1,  q'_2,  k_2-1 ) + \csub
        \end{gathered}
      \right\}.
    \end{equation*}
  The intuition is that, in the case of \changed{a delete}, $F_1$ will process one
  observation, transitioning from $q_1$ to $q_1'$ and reducing the length of
  the remaining observation string by 1, whereas $F_2$ does not process any
  observations, remaining in state $q_2$, with the same number of observations
  remaining. \changed{For an insert, $F_2$ will process one
  observation, transitioning from $q_2$ to $q_2'$ and reducing the length of
  the remaining observation string by 1, whereas $F_1$ does not process any
  observations, remaining in state $q_1$, with the same number of observations
  remaining.}  
  For a substitution, both $F_1$ and $F_2$ transition to new
  states, and both $k_1$ and $k_2$ are decreased.  This matches the usual
  recurrence for edit distance, with the extra complication that we must
  consider the states reached by the two filters in addition to the string
  lengths.
\end{enumerate}

This recurrence leads directly to an algorithm for computing $D_e(F_1, F_2)$.
Pseudocode appears as Algorithm~\ref{alg:De}.  The algorithm applies the
recurrence, starting from $k_1=k_2=0$, and increasing those indices until the
average error per observation converges, as in Lemma~\ref{lem:Dlim2}. The most
important subtlety is in the ordering:  Before computing $d_e$ values with
input string lengths $(k_1, k_2)$, we must ensure that the corresponding
computations have been completed for $(k_1-1, k_2)$, $(k_1,k_2-1)$, and
$(k_1-1,k_2-1)$.

\begin{algorithm}[t]
  \label{alg:De}
  \caption{Dynamic programming to compute $D_e(F_1, F_2)$.}
  \DontPrintSemicolon
  \SetAlgoLined
  \newcommand{\done}{\ensuremath{\mbox{done}}}
  \newcommand{\true}{\ensuremath{\mbox{True}}}
  \newcommand{\false}{\ensuremath{\mbox{False}}}
  $i \la 0$ \;
  \Repeat{\done}{
    $\done \la \true$\;
    \For{$j \la 0, \ldots, i$}{
      \For{$(q_1, q_2) \in V_1 \times V_2$}{
        Compute $d_e(q_1, i, q_2, j)$ using Eq.~\ref{eq:ED-baseA}, \ref{eq:ED-baseB}, \ref{eq:ED-baseC}, or \ref{eq:general-e}.\;
        Compute $d_e(q_1, j, q_2, i)$ using Eq.~\ref{eq:ED-baseA}, \ref{eq:ED-baseB}, \ref{eq:ED-baseC}, or \ref{eq:general-e}.\;
      }
    }
    \For{$(q_1, q_2) \in V_1 \times V_2$} {
      \If{$\left|
        \frac{d_e(q_1, i, q_2, i)}{i+1}
        -
        \frac{d_e(q_1, i-1, q_2, i-1)}{i}
      \right| > \epsilon$}{
        $\done \la \false$ \;
      }
    }
    $k \la k + 1$ \;
  }
  \Return{$\displaystyle \max_{i=0,\ldots,\changed{k-1}} \frac{d_e(q_{0_1}, i, q_{0_2}, i)}{i+1}$}

\end{algorithm}

Note that Algorithm~\ref{alg:De} is significantly slower than
Algorithm~\ref{alg:Dh} because of the need for an additional nested loop to
accommodate the differing string lengths between $F_1$ and $F_2$.
This difference, which we also observe in the experiments described in
Section~\ref{sec:exp}, confirms the basic intuition that Hamming distance is a
computationally simpler metric for string distance than edit distance.  Note,
however, that edit distance is a more `forgiving' metric, in the sense that if
$\cins=\cdel=\csub=1$, then $D_e(F_1, F_2) \le D_h(F_1, F_2)$ for any two
filters $F_1$ and $F_2$.

\section{Local greedy sequential reduction}\label{sec:greedy-reduce}
In this section, we present a heuristic algorithm for improper filter reduction
that efficiently reduces the input filter to the required size, while
attempting to minimize the error.  The algorithm performs a series of greedy
``merge'' operations, each of which reduces the filter size by one.  We first
describe the details of this merge operation (Section~\ref{sec:merge}), and
then propose an approach for selecting pairs of states to merge
(Section~\ref{sec:reduce}).

\subsection{Merging states}\label{sec:merge}
Suppose we have a filter $F$ with $n$ states, and we want form a new, nearly
identical, filter $F'$ with $n-1$ states.  One way to accomplish this is to
select two states $q_1$ and $q_2$ from $F$---deferring to
Section~\ref{sec:reduce} the question of how to select $q_1$ and $q_2$---and
merge them, in the following way. 
\newcommand{\qot}{q_{1,2}}  
\begin{itemize}

  \item Replace $q_1$ and $q_2$ with a combined state, denoted $\qot$.  If
  either of $q_1$ or $q_2$ is the filter's initial state, then $\qot$ becomes
  the new initial state. \changed{We assign the color of $q_1$ to the combined state $\qot$.}

  \item Replace each in-edge $\edge{q_i}{y}{q_1}$ of $q_1$, with a new edge
  $\edge{q_i}{y}{\qot}$ to the combined state.  Repeat for $q_2$, replacing
  each $\edge{q_i}{y}{q_2}$ with $\edge{q_i}{y}{\qot}$.

  \item Replace each out-edge $\edge{q_1}{y}{q_j}$ of $q_1$ with a new edge
  $\edge{\qot}{y}{q_j}$.

  \item For each out-edge $\edge{q_2}{y}{q_j}$, determine whether $q_1$ has
  an out-edge with the same label $y$.  If so, then discard the edge
  $\edge{q_2}{y}{q_j}$.  If not, then replace that edge with
  $\edge{\qot}{y}{q_j}$.
\end{itemize}
The intuition is to replace the two states $q_1$ and $q_2$ with just one state,
with as little disruption to the filter as possible.  The only
complication---and the reason that this merge operation must be more complex
than a standard vertex contraction---is that the combined node can have at most
one out-edge for each observation label.  When $q_1$ and $q_2$ both have
out-edges with the same label, we arbitrarily resolve that conflict by giving
priority to $q_1$ over $q_2$.

In the context of the \textsc{Improper-FM} problem, we also must ensure that the
filter $F'$ resulting from a merge in $F$ has $L(F) \subseteq L(F')$.  This may
not be immediately true, as illustrated in Figure~\ref{fig:merge-v}.  To
resolve this problem, we perform a forward search over state pairs $(q_a,
q_b)$, in which $q_a$ is from the original filter $F$ and $q_b$ is from the
merged filter $F'$, starting from $(\changed{q_2}, \qot)$ \changed{when we suppose
that $q_2$ is merged into the $q_1$ to create $\qot$}.  Each time this search finds a
reachable state pair $(q_a, q_b)$ and an observation $y$, for which $F$ has an
edge $\edge{q_a}{y}{q_a'}$ but $F'$ has no edge from $q_b$ labeled $y$, we
insert an edge $\edge{q_b}{y}{q_a'}$ into $F'$.  This ensures that any
observation sequence that can be processed by $F$ can also be processed by
$F'$.  The runtime of this post-processing step is quadratic in the number of
states in $F$.

\begin{figure}
  \quad
  \includegraphics{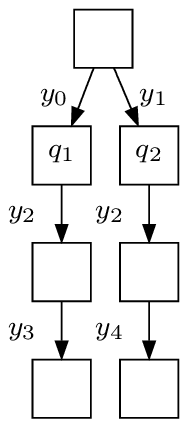}
  \hfill
  \includegraphics{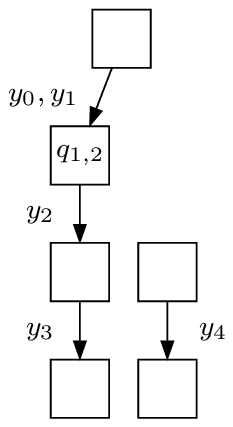}
  \hfill
  \includegraphics{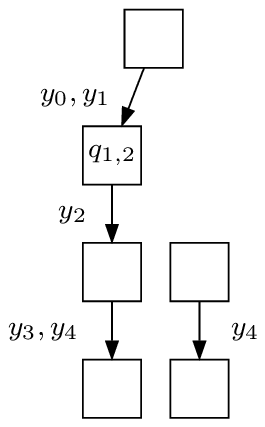}
  \quad
  \null

  \label{fig:merge-v}
  \caption{An illustration of the need for post-processing after merging two
  states $q_1$ and $q_2$, to ensure that $L(F) \subseteq L(F')$.
  [left] A
  filter $F$, before $q_1$ and $q_2$ are merged.
  [center] Another filter $F'$, formed by combining $q_1$ with $q_2$, but
  without post-processing.  In this example, $F$ can process the observation
  string $y_1y_2y_4$, but $F'$ fails on that input.
  [right] The post-processing step adds an edge to resolve the problem.
  }
\end{figure}
We write $\merge(F, q_1, q_2)$ to denote the filter resulting from applying
this complete merge operation to $q_1$ and $q_2$ in $F$.

\subsection{Selecting pairs of states to merge}\label{sec:reduce}
We can use this merge operation to form a heuristic algorithm for improper
filter reduction in a local greedy way.  The basic idea is to consider all
ordered pairs of distinct states in the current filter as candidates to merge,
and to compare the filter resulting from each such merge to the original input
filter.  The merge that results in the smallest distance from the original
filter is kept, and the algorithm repeats this process until filter is reduced
to the desired size.

Beyond this basic idea, we add two additional constraints to improve the
quality of the final solution.  First, we reject any merge operation that
leaves some states unreachable, unless all available merges leave at least one
unreachable state.  Second, we also reject any merge operation that eliminates
at least one output color from the resulting filter, unless all available
merges eliminate a color.  
Algorithm~\ref{alg:Greedy} shows the complete approach.  See
Section~\ref{sec:exp} for an evaluation of this algorithm's effectiveness.

 \begin{algorithm}
   \label{alg:Greedy}
   \caption{A greedy sequential method to reduce $F$ to $k$ states.}
   \DontPrintSemicolon
   \SetAlgoLined
   \newcommand{\failed}{\ensuremath{\mbox{failed}}}
   \newcommand{\true}{\ensuremath{\mbox{True}}}
   \newcommand{\false}{\ensuremath{\mbox{False}}}
   $f \la \false$\;
   $F_{\rm orig} \la F$ \;
   \While{$|V(F)| > k$}{
      $D^\star \leftarrow \infty$ \;
      \For{$(q_1, q_2) \in V(F) \times V(F)$}{
          \If{$q_1 = q_2$}{
            \textbf{continue} \;
          }
          $F' \leftarrow \operatorname{merge}(F, q_1, q_2)$ \;

          \BlankLine
          
          \If{$f = \false$}{
            \If{$F'$ has unreachable states}{
              \textbf{continue} \;
            }
            \If{$F'$ has unreachable colors}{
             \textbf{continue} \; 
            }
          }

          \BlankLine

          $D \la D_m(F_{\rm orig}, F')$ \tcp{Use Alg.~\ref{alg:Dh} or Alg.~\ref{alg:De}.} \;
          \If {$D < D^\star$}{
            {$D^\star \leftarrow D$}\;
            {$F^\star \leftarrow F'$}\;
          }
       }

       \If{$D^\star = \infty$}{
          $f \leftarrow \true$ \;
       }
       \Else{
          $f \leftarrow \false$ \;
          $F \leftarrow  F^\star$\;
       }
   }
   \Return $F$ \;
  
 \end{algorithm}

\section{\changed{Randomized global} reduction}\label{sec:global-reduce}

The central limitation of Algorithm~\ref{alg:Greedy} is that it selects merges
to perform in a sequential way, and therefore cannot account for the impact
that each merge has on later steps in the algorithm.  In this section, we
present an alternative to Algorithm~\ref{alg:Greedy} that avoids this problem
by selecting all of the merges to perform at once, in a global way.
Algorithm~\ref{alg:Global} outlines the approach.
The intuition is to cast the problem of choosing which states to merge with
each other as an improper graph coloring problem, which we solve using an existing
algorithm (Section~\ref{sec:coloring}).  We then apply
a randomized `voting' process to construct the reduced filter from the colored
graph (Section~\ref{sec:voting}).

\begin{algorithm}
  \label{alg:Global}
  \caption{A randomized global method to reduce $F$ to $k$ states using $r$
  iterations.}
  \DontPrintSemicolon
  \SetAlgoLined

  $C \la C(F)$ \;
  $c \la$ improper coloring of $C$ with $k$ colors\cite{Araujo}.

  \BlankLine

  $D^\star \leftarrow \infty$ \;
  \For{$\changed{R} \la 0, \ldots, r$}{
    $F' \la$ empty filter \;
    \For{each color $i$ in $c$}{
      \ambiguty{$q \la $ state in $F$ randomly selected from those colored $i$ in $C(G)$}\;
      Create a state $q_i$ in $F'$ with same output color as $q$. \;
    }
    \For{each color $i$ in $c$}{
      \For{each observation $y$ \changed{$\in Y$} in $F$}{
        \ambiguty{$q \la $ state in $F$ randomly selected from those both colored~$i$ in $C(G)$ and with an edge $\edge{q}{y}{w}$ in $F$.}\;
        Create an edge in $F'$ from $q_i$ to $q_{c(w)}$ labeled $y$.
      }
    }

    \BlankLine

    $D \la D_m(F, F')$ \tcp{Use Alg.~\ref{alg:Dh} or Alg.~\ref{alg:De}.}
    \If {$D < D^\star$}{
      {$D^\star \leftarrow D$}\;
      {$F^\star \leftarrow F'$}\;
    }
  }
  \Return{$F^\star$} 

\end{algorithm}

\subsection{Selecting merges via improper graph coloring}\label{sec:coloring}

We can view the problem of reducing a given filter $F$ down to $k$ states as a
question of partitioning the states of $F$ into $k$ groups.  Our algorithm
accomplishes this by solving a coloring problem on a complete undirected graph
$C(F)$, defined as follows:
\begin{enumerate}
  \item For each state $q$ in $F$, we create one vertex $v(q)$ in $C(F)$.

  \item For each pair of distinct vertices $v(q_1), v(q_2)$ in $C(F)$, we create a
  weighted edge.  The intuition is that the weights should be estimates of how
  different the two corresponding states are.
  If the behavior of $F$ is very similar when started from $q_1$ as when
  started from $q_2$, then we should assign a relatively small weight to the
  edge between $v(q_1)$ and $v(q_2)$.
  Conversely, if the behavior of $F$ differs significantly when started from
  $q_1$ compared to starting from $q_2$, then we should assign a relatively
  large weight to that edge.
  
  We can capture these kinds of differences by computing the distance of $F$
  with itself---That is, by computing either $D_h(F, F)$ or $D_e(F, F)$---and
  examining the intermediate values---either $d_h(q_1, q_2, k)$ or $d_e(q_1, k,
  q_2, k)$, in which $k$ is the number of iterations of the outermost loops in
  Algorithm~\ref{alg:Dh} or Algorithm~\ref{alg:De}---computed along the way.
  Specifically, we assign $w(q_1, q_2)$ as
    $$ w(q_1, q_2) =  
      \lim_{k \to \infty} \left(
        \max_{i \in 1,\ldots,k} \frac{d_h(q_1, q_2, i)}{i+1}
      \right)
    $$
  for Hamming distance, and 
    $$ w(q_1, q_2) =  
      \lim_{k \to \infty} \left(
        \max_{i \in 1,\ldots,k} \frac{d_e(q_1, i, q_2, i)}{i+1}
      \right)
    $$
  for edit distance.
\end{enumerate}
Figure~\ref{fig:donut-plot5-1} shows an example of this construction.
\begin{figure}[t]
  \begin{center}
    \includegraphics[scale=0.5]{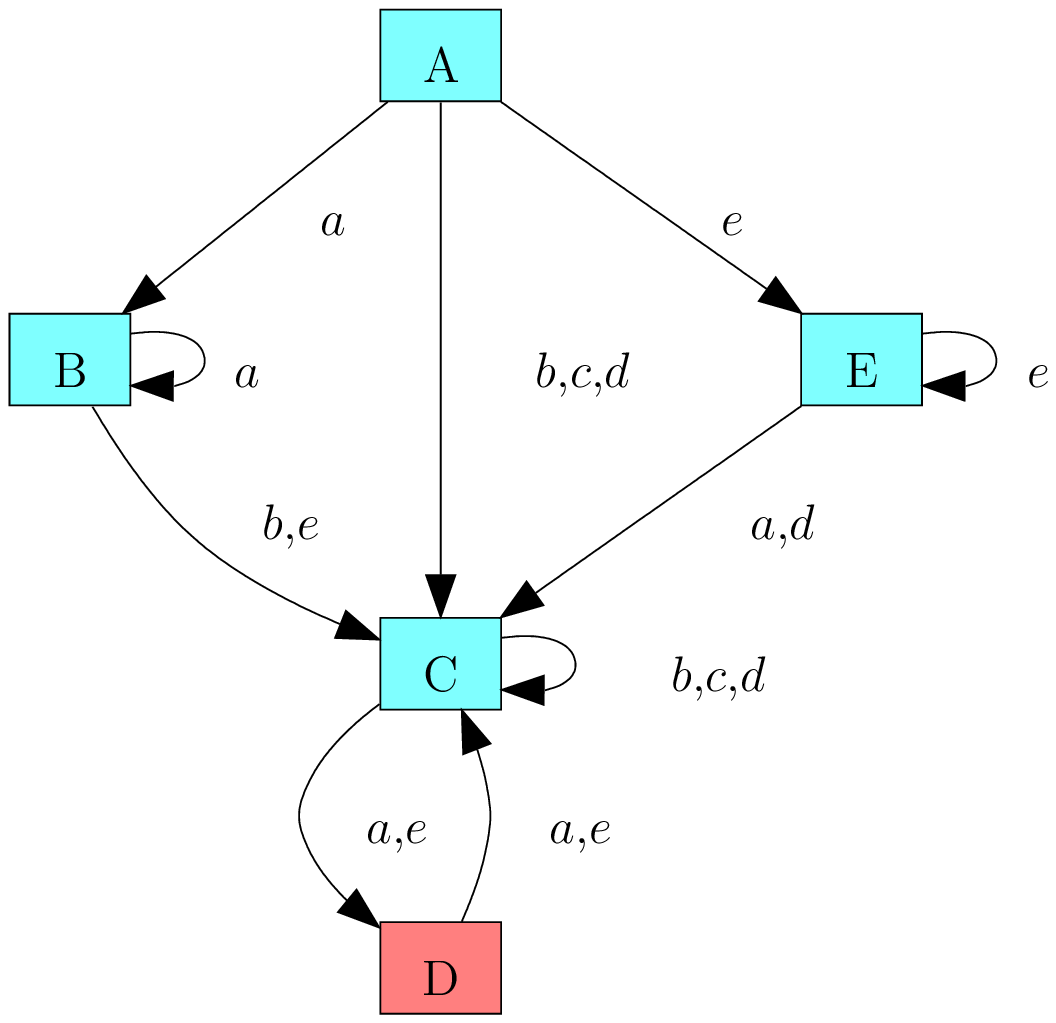}
    \qquad
    \includegraphics[scale=0.5]{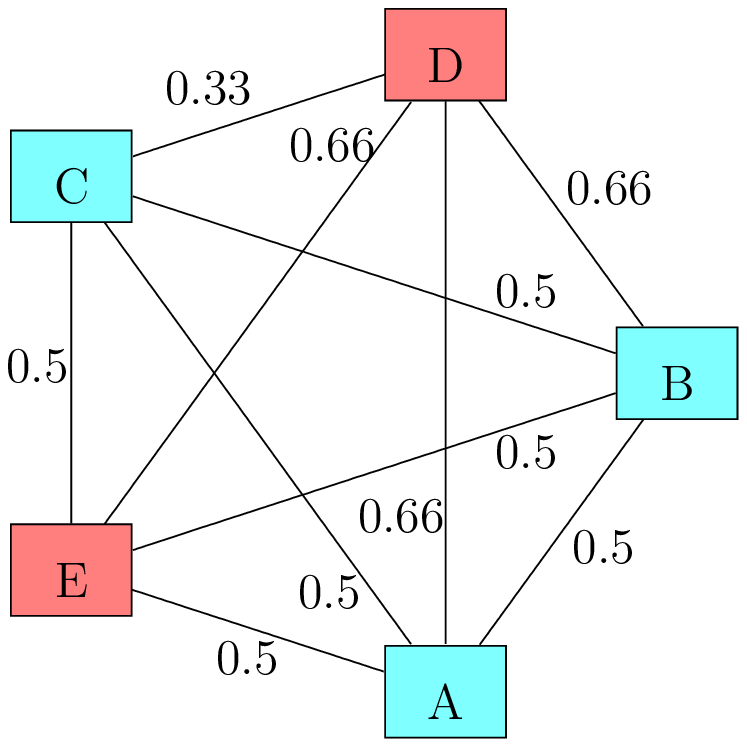}
  \end{center}
  \caption{[left] An example five-state filter.  (This filter was originally
  generated by the algorithm of O'Kane and Shell~\cite{OKaShe13}.  [right] The
  corresponding complete graph, along with an improper coloring \changed{of} that graph with
  two colors.}
  \label{fig:donut-plot5-1}
\end{figure}

The idea is then to assign a color $c(v)$ to each vertex $v$ of $C(F)$, using
at most $k$ colors, while minimizing the worst case over all vertices, of the
total weight of edges to same-colored neighbors.
That is, we want assign $k$ colors to the vertices of $C(F)$ in a way that
minimizes this objective function:
  \begin{equation}
    O(F, c) = \max_{v \in V(C(F))} \sum_{\{u \in V(C(F)) - \{ v \}  \: \mid \: c(v) = c(u)\}} w(u,v),
  \end{equation}
in which $V(C(F))$ denotes the vertex set of $C(F)$.  This optimization
problem, which is known as the \emph{Threshold Improper Coloring Problem}, has
been addressed by Araujo, Bermond, Giroire, Havet, Mazauric and
Modrzejewski~\cite{Araujo}.
Though the problem is NP-hard even to approximate---Note that an efficient
algorithm for this problem could be used to build an efficient algorithm for
the standard proper graph coloring problem---that prior research
presents a randomized heuristic algorithm that performs well in most cases.
We use that algorithm to color $C(F)$.

\subsection{Randomized voting for improper filter reduction}\label{sec:voting}
Next, we form the reduced filter $F'$ by `merging' the states in $F$
corresponding to each group of same-colored nodes in $C(F)$ into a single state in
$F'$.
The process is somewhat analogous to pairwise merging process
described in Section~\ref{sec:merge}, but must account for some important
differences.
Most importantly, because we want to merge the states within each of the $k$
color groups simultaneously, when selecting the edges in the reduced filter
$F'$, it only needs to consider which state in $F'$---that is, which color
group---should be the target of that edge, rather than making a finer-grained
selections of some state in some partially-reduced version of $F$.

Note, however, that the states in each color group may not agree on
which $F'$ state should be reached under each observation.  When, for a given observation $y$, such
disagreements occur, we resolve them in a randomized way.
The algorithm selects, using a uniform random distribution, one of the $F$
states in this color group that has \changed{an} edge labeled with $y$, and adds a
transition in $F'$ the state corresponding to the color group reached by that
state.  This forms a kind of `weighted voting,' in which it is more likely to
select transitions that correspond to larger number of states in the group.

Similarly, we select the output color of each state in $F'$ by randomly
selecting one of its constituent states and using its color as the output for
the combined state.
Continuing the example from Figure~\ref{fig:donut-plot5-1}, observe that to
create reduced filter with two states, one of those states will correspond to
the original filter's A, B, and C states, and other one will consist of D and
E.  Since D and E have different colors in the original filter, the algorithm
\changed{will} assign the new DE state to each of the two possible output colors with
equal probability.

Because the final filters produced by this process are not deterministic, we
repeat the reduction several times and return the best filter resulting from
those iterations.

%
\section{Implementation and Experimental Results}\label{sec:exp}

We have implemented Algorithms~\ref{alg:Dh}--\ref{alg:Global} in Python.
The experiments described below were executed on a GNU/Linux computer with a
3GHz processor.
Throughout, we used $\epsilon=0.035$ in Algorithms~\ref{alg:Dh} and
\ref{alg:De}, $\cins=\cdel=\csub=1$ in Algorithm~\ref{alg:De}, and $r=400$ in
Algorithm~\ref{alg:Global}.

\subsection{Distance and run time for varying filters}\label{sec:exp1}
First, we consider a family of filter reduction problems originally described
by Tovar, Cohen, and LaValle~\cite{tovar09beams}.  In these problems, a pair of
robots move through an annulus-shaped environment, occasionally crossing a beam
sensor that detects a crossing of that beam has occurred, but cannot
detect which robot has crossed, nor the direction of that crossing.  The filter
should process these observations and output 0 if the robots are together, or 1
if the robots are separated by at least one beam.  (This is a different and
more challenging problem than the single-robot variant described in
Section~\ref{sec:intro}.)

We varied the number of beam sensors from 3 to 9, and tested two equivalent
filters for each number of beams: one `unreduced' na\"\i{}ve filter, formed by
directly computing the sets of possible states after each observation, and a
smaller `reduced' filter produced by applying the algorithm of O'Kane and
Shell~\cite{OKaShe13} to the unreduced versions.  These reduced filters have the
same behavior as their unreduced counterparts, but are the smallest filters for
which that equivalence holds.

For each of these $7 \cdot 2 = 14$ filters, we executed both
Algorithm~\ref{alg:Greedy} and Algorithm~\ref{alg:Global}, using both $D_h$
and $D_e$ as the underlying metric for each algorithm.  The target filter size
was set to $k=2$, the maximal meaningful reduction, for each of these trials.
The results appear in Figure~\ref{fig:DHD} for Hamming distance and in
Figure~\ref{fig:DED} for edit distance.
\begin{figure}[t]
   \begin{center}
	  \includegraphics[width=\columnwidth]{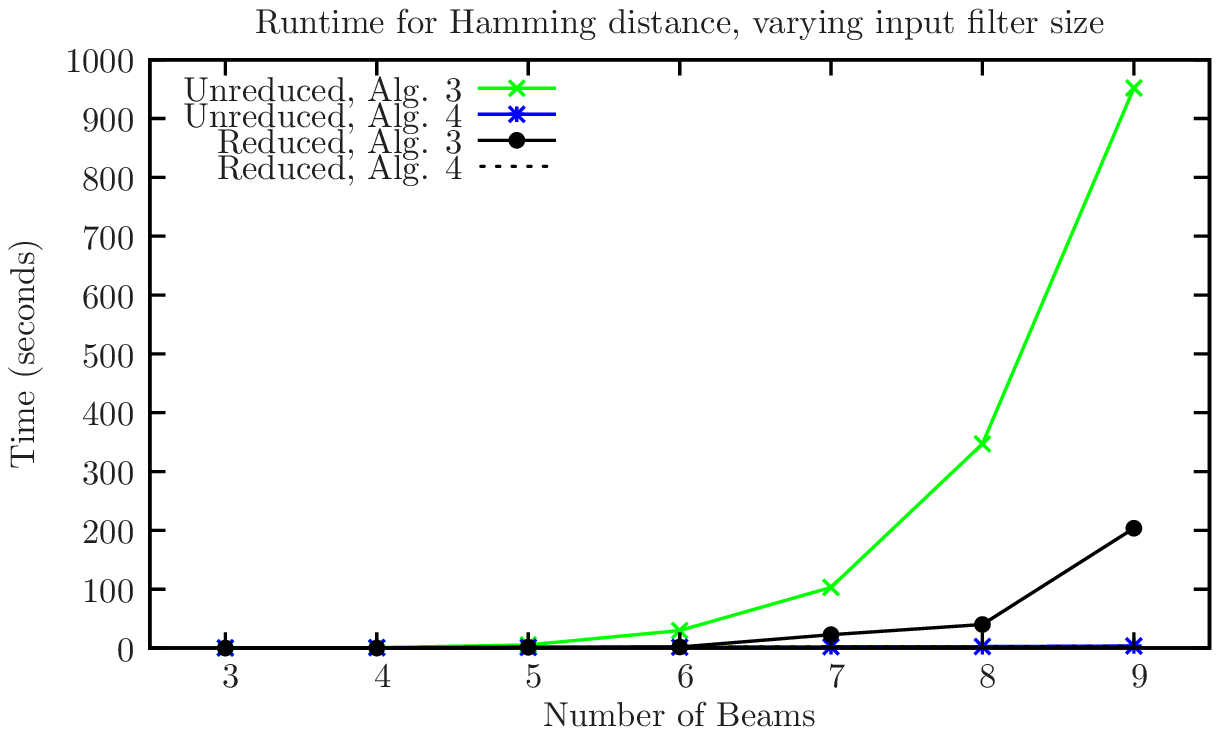}

    \bigskip
    \bigskip

   	\includegraphics[width=\columnwidth]{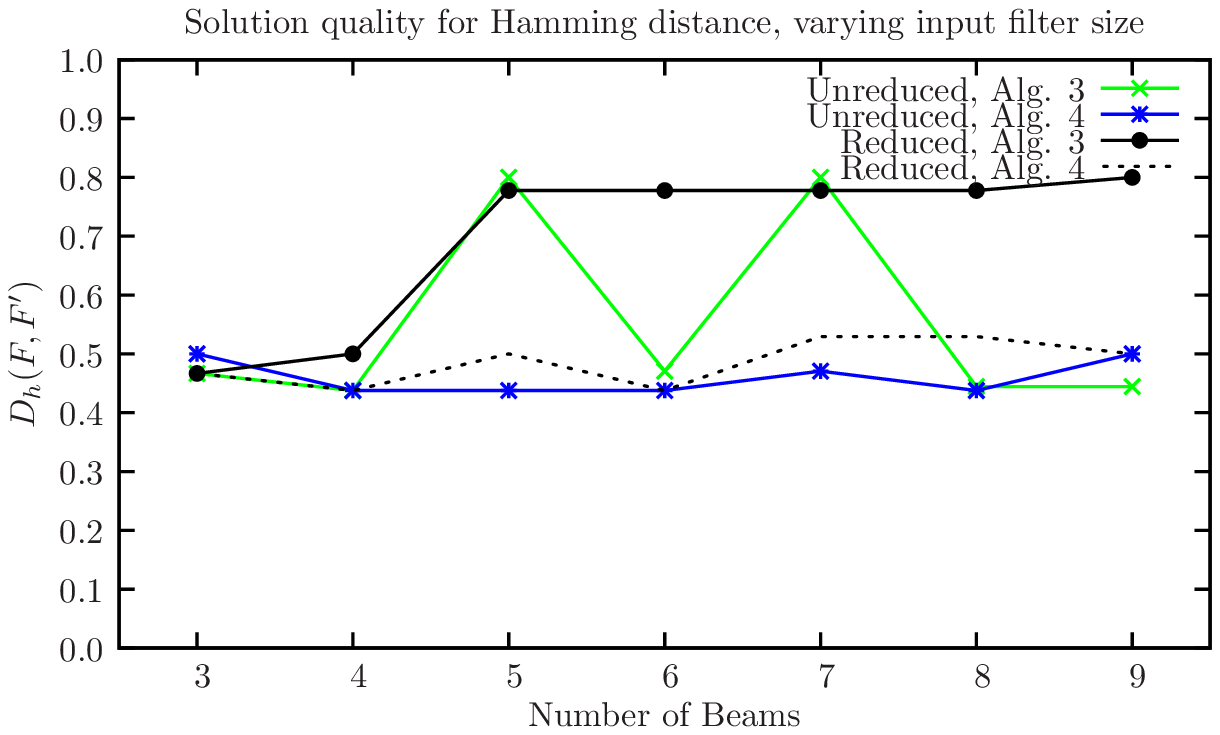}
   \end{center}
  \caption{Results for reduction of annulus filters under Hamming distance using
  Algorithm~\ref{alg:Greedy} and Algorithm~\ref{alg:Global}. [top] Run time.
  [bottom] Final distance $D_h(F, F')$ between original and reduced filters.}
	\label{fig:DHD}
\end{figure}

\begin{figure}[t]
   \begin{center}
	  \includegraphics[width=\columnwidth]{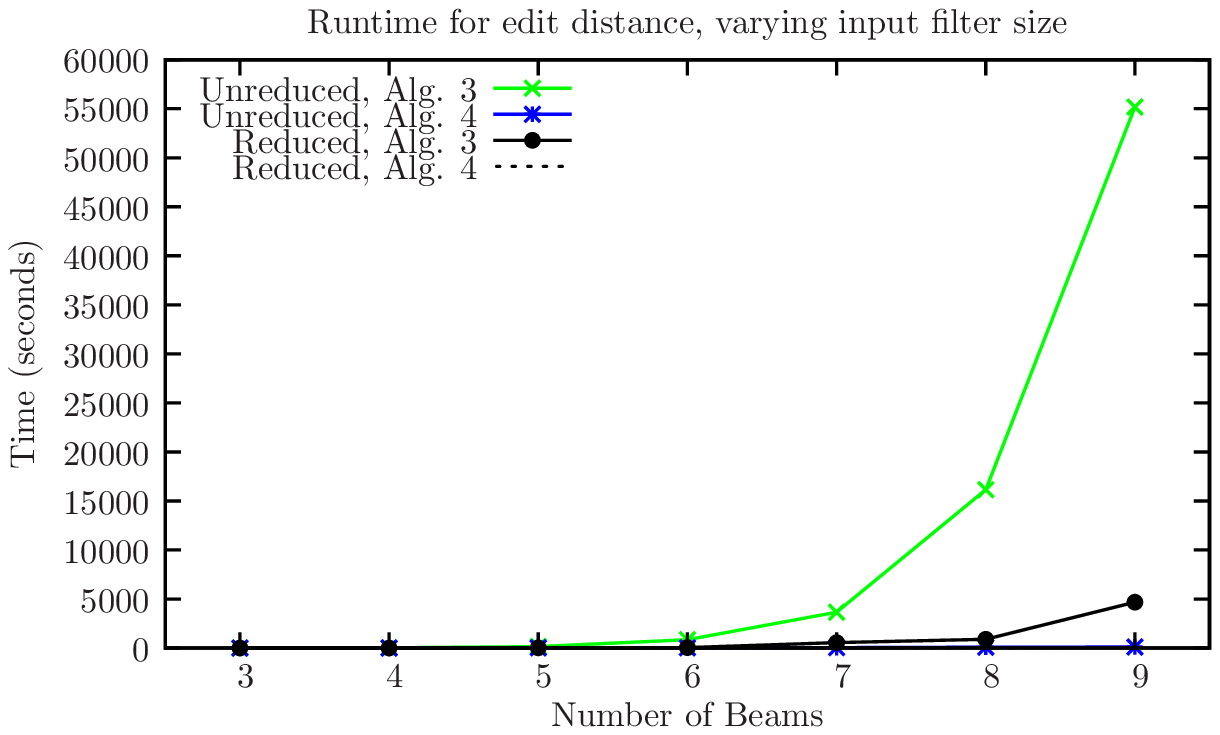}

    \bigskip
    \bigskip

   	\includegraphics[width=\columnwidth]{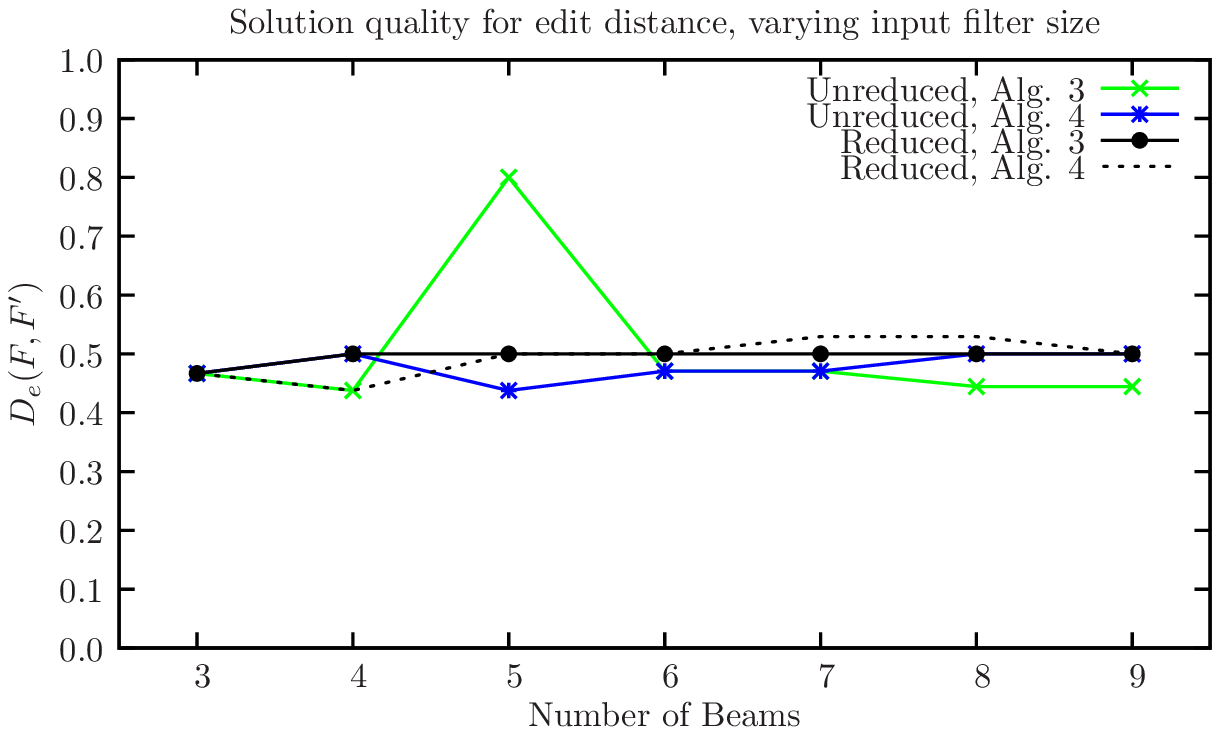}
   \end{center}
  \caption{Results for reduction of annulus filters under edit distance using
  Algorithm~\ref{alg:Greedy} and Algorithm~\ref{alg:Global}. [top] Run time.
  [bottom] Final distance $D_e(F, F')$ between original and reduced filters.}
	\label{fig:DED}
\end{figure}

These results show that the final solution quality is somewhat better for
Algorithm~\ref{alg:Global} in many cases.  The global approach of
Algorithm~\ref{alg:Global} is also faster the greedy sequential reduction of
Algorithm~\ref{alg:Greedy}.  \ambiguty{The difference, which is especially pronounced as the
filter size grows large}, is explained by the fact that
Algorithm~\ref{alg:Global} uses its subroutine for distance between
filters---that is, Algorithm~\ref{alg:Dh} or \ref{alg:De}---only once, rather
than many times for each reduction.

Note that the computation time is substantially shorter for Hamming
distance than for edit distance, resulting from the extra nested loop in
Algorithm~\ref{alg:De} that is not needed in Algorithm~\ref{alg:Dh}.

\clearpage 

\subsection{Varying the target size for a single filter}

Next, we evaluated the impact of the target size $k$ on the algorithms'
performance.  We used the two-agent eight-beam filter described above in
Section~\ref{sec:exp1}, in both its unreduced and reduced forms.  We varied $k$
from 2 to 10 and executed each of the algorithms for all $k$.
Figures~\ref{fig:D8HD} and \ref{fig:D8ED} show the results for Hamming distance
and edit distance, respectively.
The results show that, across all cases in this range, the run time is almost entirely
unaffected by the target size.  The solution quality shows a slight improving
trend as the target size increases, as one might expect.

\begin{figure}[t]
  \begin{center}
    \includegraphics[width=\columnwidth]{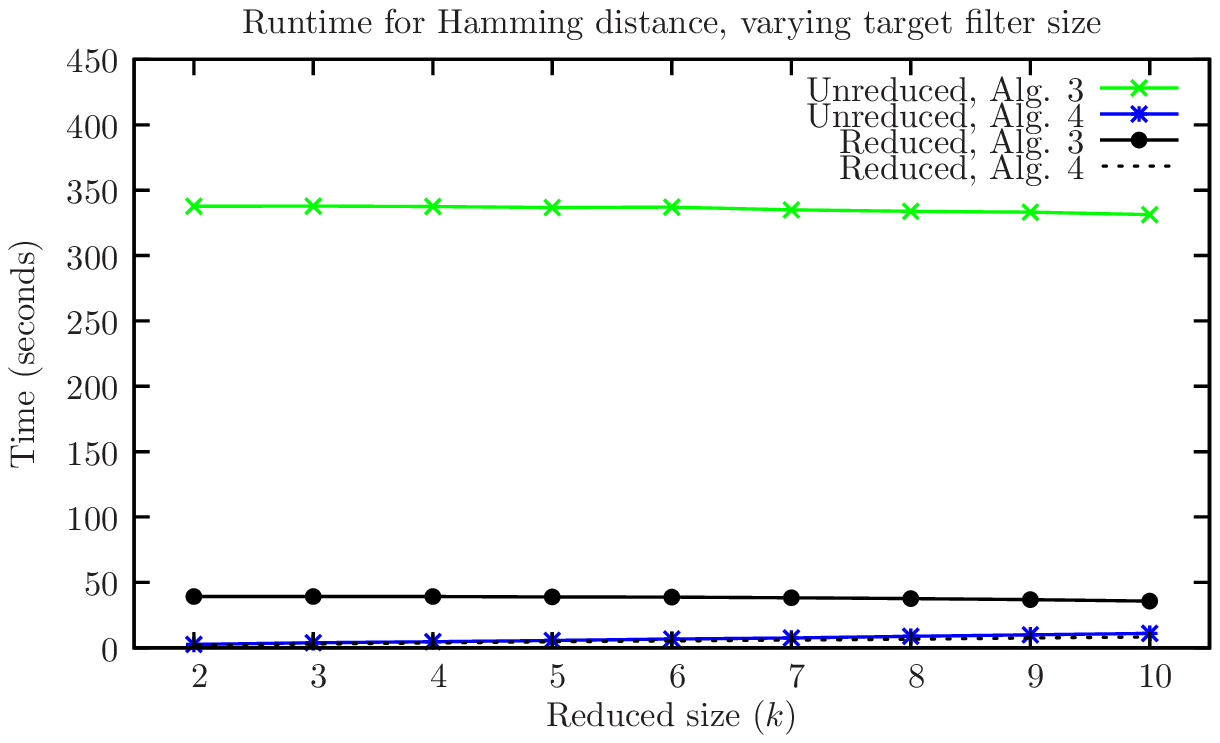}

    \bigskip
    \bigskip

    \includegraphics[width=\columnwidth]{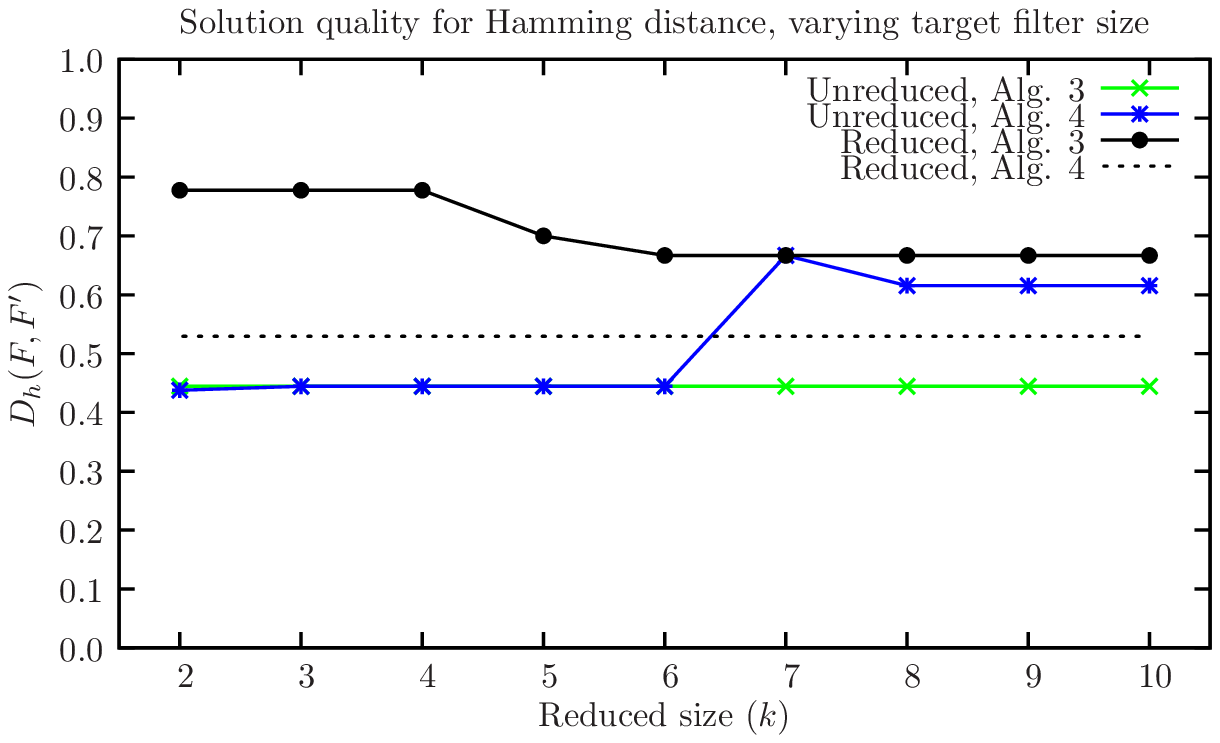}
  \end{center}
  \caption{Results for reduction of the eight-beam, two-robot annulus filter
  for varying target filter sizes under Hamming distance.  [top] Run time.
  [bottom] Final distance $D_h(F, F')$ between original and reduced filters.}
	\label{fig:D8HD}
\end{figure}

\begin{figure}[t]
  \begin{center}
    \includegraphics[width=\columnwidth]{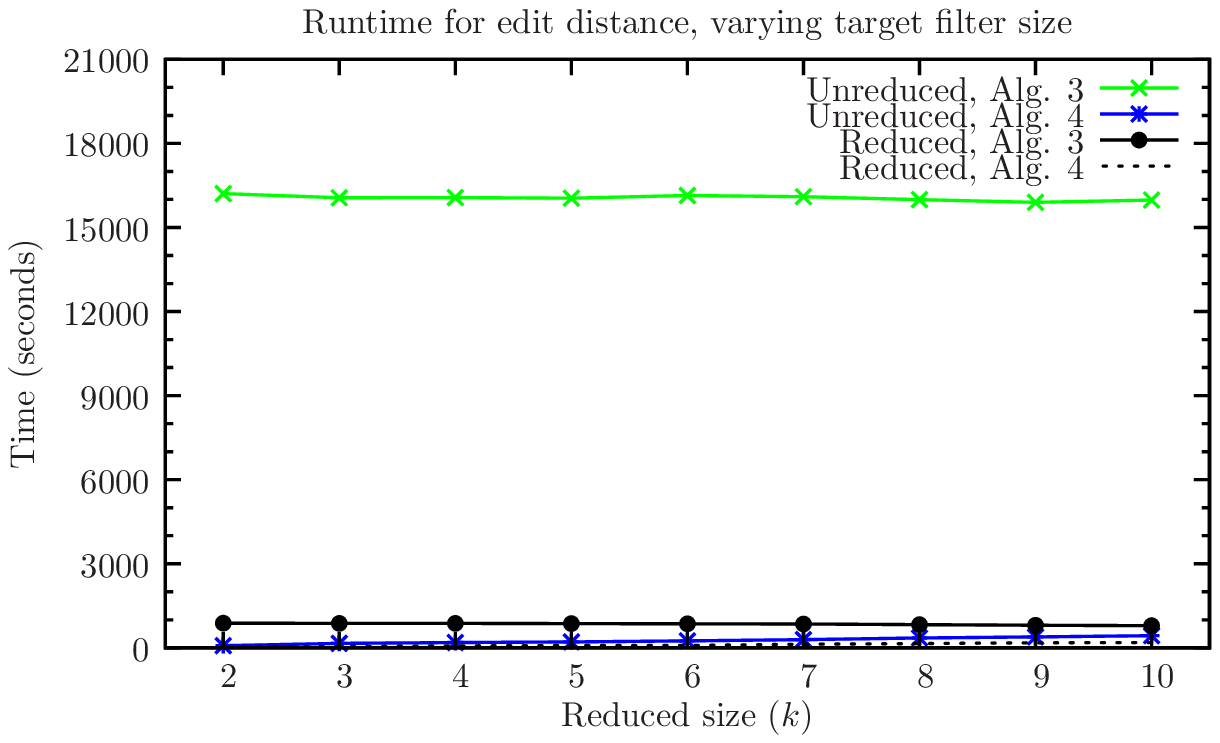}

    \bigskip
    \bigskip

    \includegraphics[width=\columnwidth]{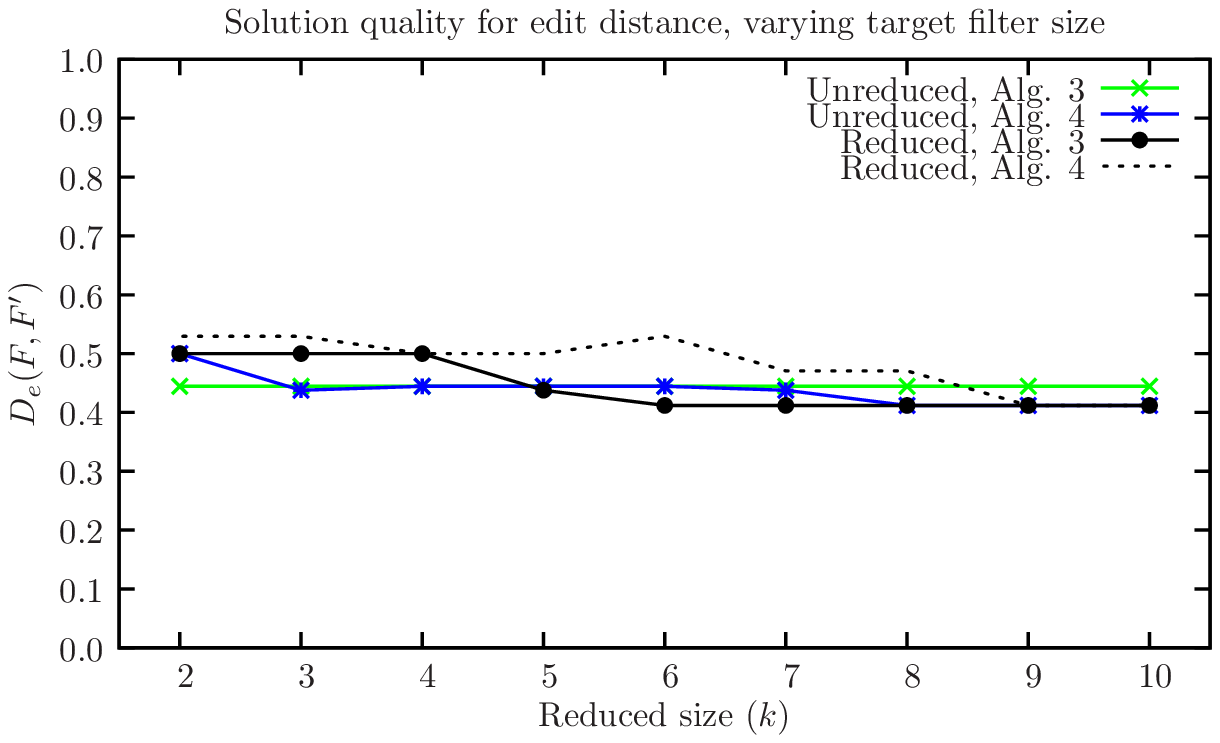}
  \end{center}
  \caption{Results for reduction of the eight-beam, two-robot annulus filter
  for varying target filter sizes under edit distance.  [top] Run time.
  [bottom] Final distance $D_e(F, F')$ between original and reduced filters.}
	\label{fig:D8ED}
\end{figure}

Finally, to confirm that these results generalize to other kinds of filters, we
repeated the analysis for the `L-shaped corridor' filtering problem introduced
by LaValle~\cite{Lav06}, which also appears in O'Kane and
Shell~\cite{OKaShe13}.  In this problem, a sensorless robot moves along a long
corridor with a right angle midway through it.  The robot moves in steps that
unpredictably vary between 1 or 2 steps.  The filter's goal is to output 0 when
the robot reaches the end of the corridor, or 1 otherwise.  This problem is
interesting because the size of the unreduced filters grows exponentially with
the corridor's length, whereas the reduced filter size grows only linearly.  In
this case, we use only the unreduced filters, because the reduced filter is too
small to be of interest.
Figures~\ref{fig:L3KHD} and \ref{fig:L3KED} show the results, which follow the
same general trends as the previous tests.  Of particular note that the
resulting filters are the same for Hamming distance and edit distance,
reflecting \changed{the} fact that insert or delete operations are not useful in this
problem, since the relevant information at each step is the furthest possible
state from the goal.  As a result, the error rates are identical between the
two metrics.  This is a clear instance in which Hamming distance, by virtue of
the faster run time of Algorithm~\ref{alg:Dh} is a better choice than edit
distance.

\begin{figure}[t]
  \begin{center}
    \includegraphics[width=\columnwidth]{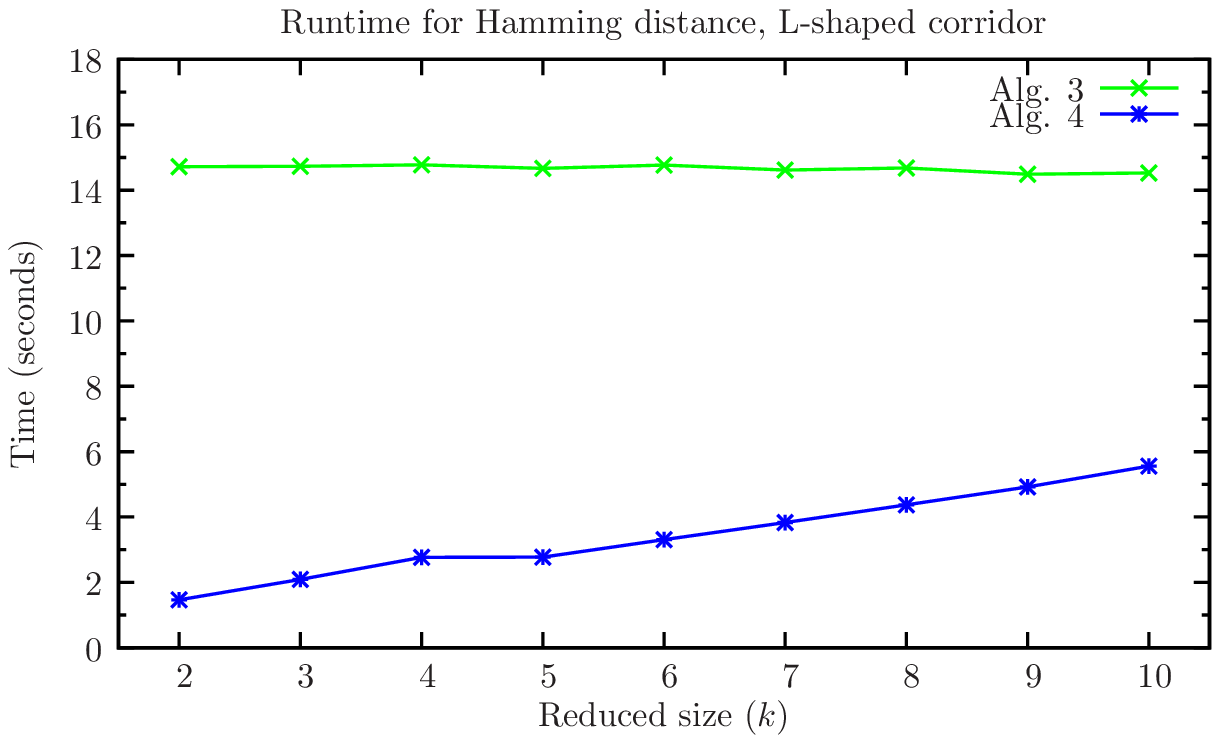}

    \bigskip
    \bigskip

    \includegraphics[width=\columnwidth]{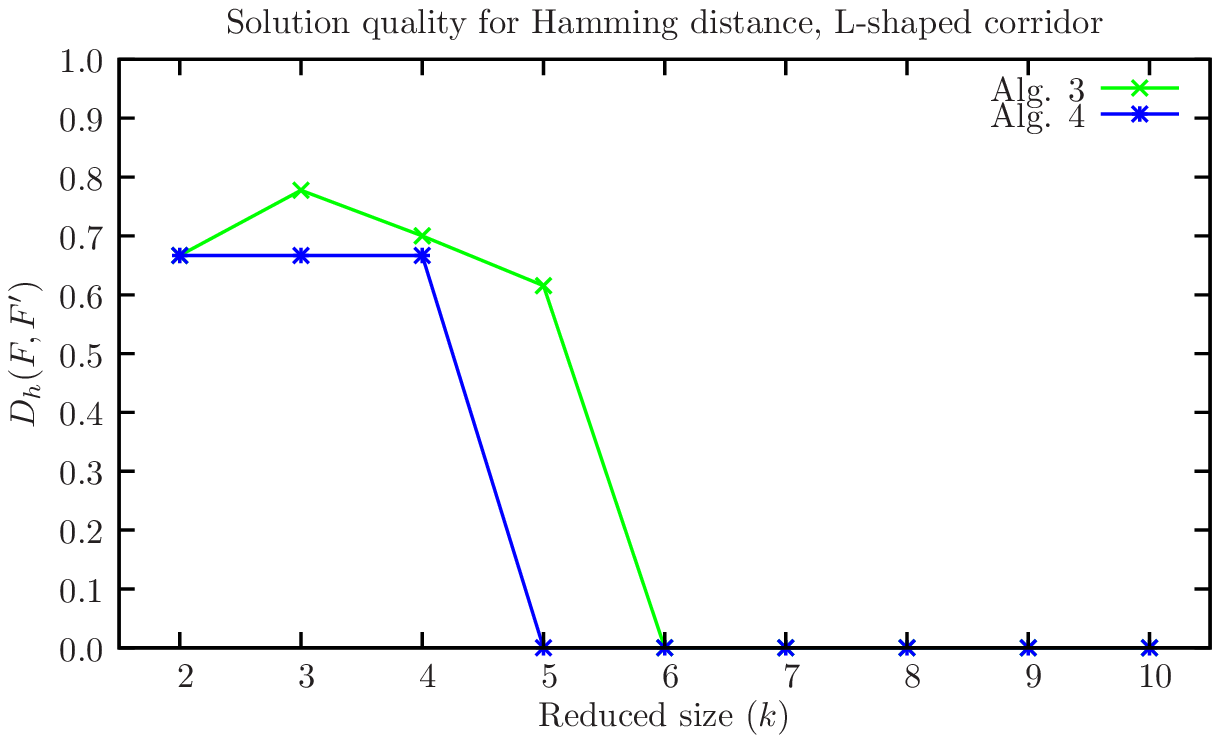}
  \end{center}
  \caption{Results for reduction of L-shaped corridor filter for varying target
  filter sizes under Hamming distance.  [top] Run time.  [bottom] Final
  distance $D_h(F, F')$ between original and reduced filters.}
	\label{fig:L3KHD}
\end{figure}

\begin{figure}[t]
  \begin{center}
    \includegraphics[width=\columnwidth]{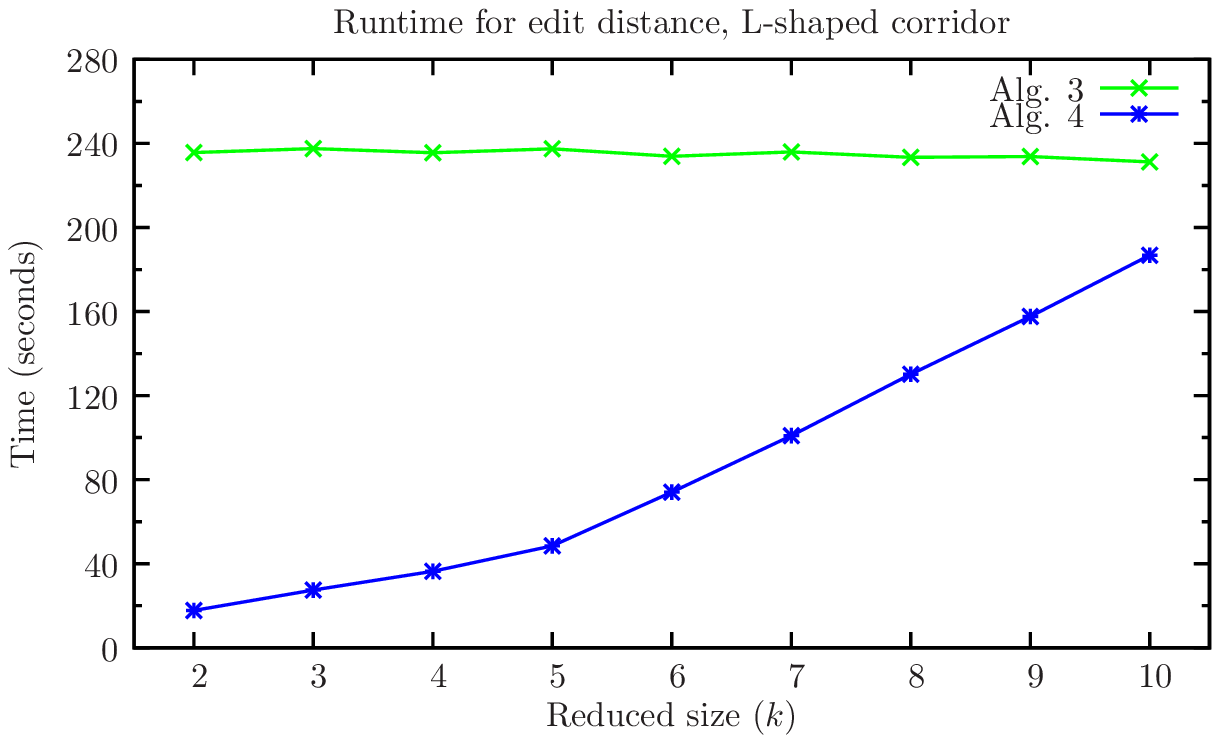}

    \bigskip
    \bigskip

    \includegraphics[width=\columnwidth]{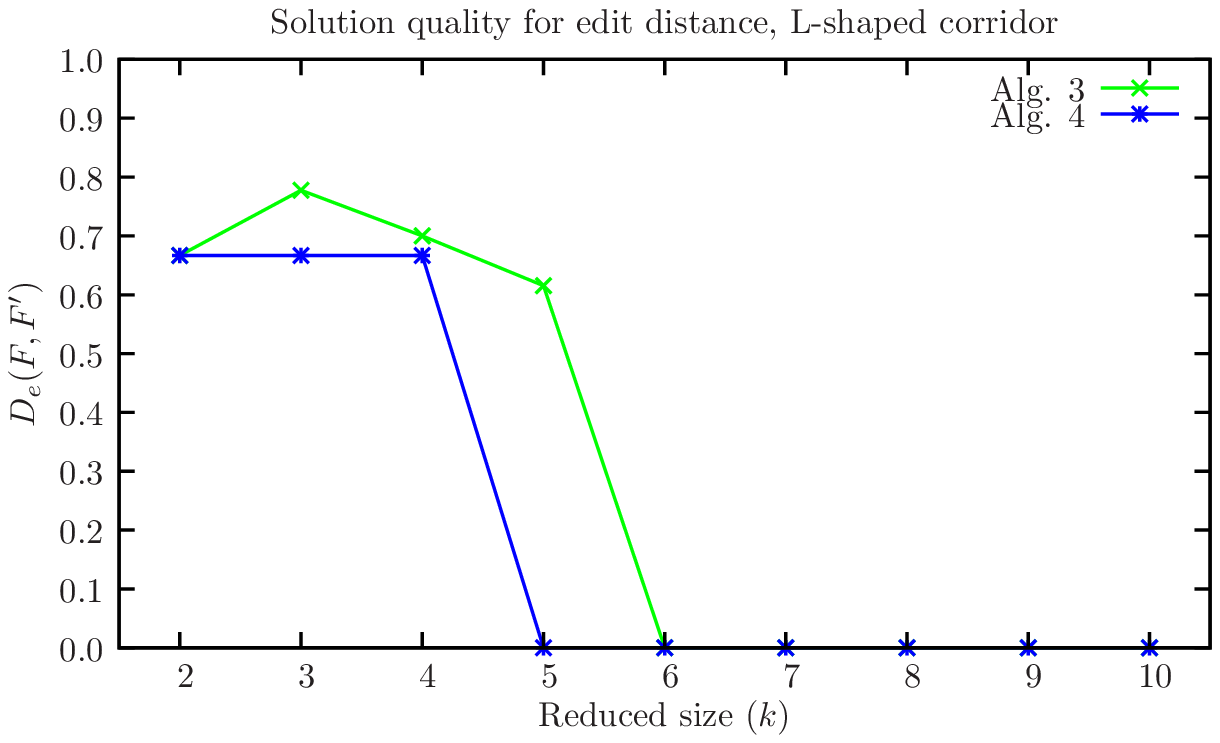}
  \end{center}
  \caption{Results for reduction of L-shaped corridor filter for varying target
  filter sizes under edit distance.  [top] Run time.  [bottom] Final
  distance $D_e(F, F')$ between original and reduced filters.}
	\label{fig:L3KED}
\end{figure}

\clearpage 

\section{Conclusion}\label{sec:conc}
In this paper, we introduced \changed{two metrics} to measure the similarity between two
filters and presented dynamic programming algorithms for computing \changed{these
measurements}. Then we presented two algorithms to reduce a filter to a given
size automatically.

There exist some future directions to extend this work.  Most directly, a good
extension of this work would be presenting more efficient algorithms.  In
particular, we anticipate that Algorithm~\ref{alg:Greedy} can be accelerated by
reducing the number of filter distance queries it makes, possibly by borrowing
the self-similarity idea from Algorithm~\ref{alg:Global}.
In addition, finding additional criteria for improving the merge operation, or
even replacing that operation with some other form of reduction is another possible
improvement.

Though we proved that \textsc{Improper-FM} is NP-hard, it is worthy of study to
determine whether it can be approximated efficiently.  It may also be the case
that certain interesting special classes of filters are efficiently solvable.

Another direction is to investigate whether \textsc{Improper-FM} is fixed
parameter tractable.  That is, does there exist a algorithm whose run time is
polynomial in the input size, but exponential in some other parameter that
measures the complexity of a given instance.

\section*{Acknowledgment}
The authors are grateful to Dylan Shell for helpful feedback on an earlier
version of this work.  This material is based upon work supported by the
National Science Foundation under Grant Nos. IIS-0953503 and IIS-1526862.

\end{document}